\definecolor{darkblue}{rgb}{0,0,0.75}
\newtheorem{theorem}{Theorem}
\newtheorem{definition}[theorem]{Definition}
\newtheorem{lemma}[theorem]{Lemma}
\newtheorem{corollary}[theorem]{Corollary}
\newcommand{\demph}[1]{\emph{#1}} 
\newcommand{\dotprod}[2]{\langle #1, #2 \rangle}
\renewcommand{\phi}{\varphi}
\newcommand{\singlehomomorphism}[0]{\Phi}
\DeclareMathOperator{\homo}{hom}
\DeclareMathOperator{\Geom}{Geom}
\DeclareMathOperator{\tw}{tw}
\DeclareMathOperator{\Poi}{Poi}
\DeclareMathOperator{\Uni}{\mathcal{U}}
\newcommand{\G}{\mathcal{G}}
\newcommand{\F}{\mathcal{F}}
\newcommand{\scH}{\mathcal{H}}
\newcommand{\scO}{\mathcal{O}}
\newcommand{\isomorphic}{\simeq}
\newcommand{\naturals}[0]{\mathbb{N}}
\newcommand{\reals}[0]{\mathbb{R}}
\newcommand{\scD}{\mathcal{D}}
\newcommand{\E}{\mathbb{E}}
\newcommand{\Set}[1]{\left\{ #1 \right\}}
\newenvironment{proofsketch}{\proof}{\endproof}
\newcommand{\homGNN}[1]{#1+hom}
\newcommand{\tablecell}[1]{\begin{tabular}[x]{@{}l@{}}#1\end{tabular}  }
\theoremstyle{plain}
\icmltitlerunning{Expectation-Complete Graph Representations with Homomorphisms}
\begin{document}

\twocolumn[
\icmltitle{Expectation-Complete Graph Representations with Homomorphisms}

\icmlsetsymbol{equal}{*}

\begin{icmlauthorlist}
\icmlauthor{Pascal Welke}{equal,bonn,wien}
\icmlauthor{Maximilian Thiessen}{equal,wien}
\icmlauthor{Fabian Jogl}{wien,caiml}
\icmlauthor{Thomas Gärtner}{wien}
\end{icmlauthorlist}

\icmlaffiliation{bonn}{Machine Learning and Artificial Intelligence Lab, University of Bonn, Germany}
\icmlaffiliation{wien}{Research Unit Machine Learning, TU Wien, Austria}
\icmlaffiliation{caiml}{Center for Artificial Intelligence and Machine Learning, TU Wien, Austria}

\icmlcorrespondingauthor{Pascal Welke}{pascal.welke@tuwien.ac.at}
\icmlcorrespondingauthor{Maximilian Thiessen}{maximilian.thiessen@tuwien.ac.at}

\icmlkeywords{Machine Learning, ICML}

\vskip 0.3in
]

\printAffiliationsAndNotice{\icmlEqualContribution} 

\begin{abstract}

We investigate novel random graph embeddings that can be computed in expected polynomial time and that are able to distinguish all non-isomorphic graphs \emph{in expectation}. Previous graph embeddings have limited expressiveness and either cannot distinguish all graphs or cannot be computed efficiently for every graph. To be able to approximate arbitrary functions on graphs, we are interested in efficient alternatives that become arbitrarily expressive with increasing resources.  
Our approach is based on Lovász' characterisation of graph isomorphism through an infinite dimensional vector of homomorphism counts. 
Our empirical evaluation shows competitive results on several benchmark graph learning tasks. \end{abstract}

\section{Introduction}
\label{sec:intro}

We study novel efficient and expressive graph embeddings motivated by Lovász' characterisation of graph isomorphism through homomorphism counts. 
While most graph embeddings drop \emph{completeness}---the ability to distinguish all pairs of non-isomorphic graphs---in favour of runtime, we devise efficient embeddings that retain completeness \emph{in expectation}. 
The specific way in which we sample a fixed number of pattern graphs guarantees an expectation-complete embedding in expected polynomial time.
In this way, repeated sampling will eventually allow us to distinguish all pairs of non-isomorphic graphs, a property that no efficiently computable deterministic embedding  can guarantee. 
In comparison, most recent graph neural networks are inherently limited by the expressiveness of some $k$-dimensional Weisfeiler-Leman isomorphism test \citep{morris2019weisfeiler, xu2018how}.
\vspace{2em}

Our approach to achieve an expectation-complete graph embedding is based on homomorphism counts. 
These are known to determine various properties of graphs important for learning, such as the degree sequence or the eigenspectrum \citep{hoang2020graph}.
Furthermore, homomorphism counts are related to the Weisfeiler-Leman hierarchy \citep{dvovrak2010recognizing, dell2018lov}, which is the standard measure for expressiveness on graphs \citep{morris2019weisfeiler}. 
They also determine subgraph counts \citep{curticapean2017homomorphisms} and the distance induced by the homomophism counts is asymptotically equivalent to the \emph{cut distance}, which \citet{grohe2020Word} and \citet{klopp2019optimal} motivated as an appropriate graph similarity for graph learning tasks.

In \Cref{sec:notation} we introduce the required concepts. In \Cref{sec:main} we discuss that general expectation-complete embeddings can eventually distinguish all pairs of non-isomorphic graphs (\Cref{lemma:repeat_until_injective}), which leads to a universal representation (\Cref{thm:universal}). Then we propose our expectation-complete embedding based on sampling entries from the Lovász vector (\Cref{thm:Lovasz_complete_exp}) and bound the number of samples required to provably get as close as desired to the full Lovász vector (\Cref{thm:lovasz_hoeffding_finite_sample}). 
In \Cref{sec:computation}, we show how to compute our embedding efficiently in expected polynomial time (\Cref{thm:completeRandomPolynomial}). In \Cref{sec:practical}, we show how to combine our embedding with graph neural networks. Finally, we discuss related work in \Cref{sec:related} and show competitive results on benchmark datasets in \Cref{sec:experiments} before \Cref{sec:conclusion} concludes. \section{Background and Notation}
\label{sec:notation}

We start by defining the required concepts and notation.
A \demph{graph} $G = (V,E)$ consists of a set $V=V(G)$ of \demph{vertices} and a set $E=E(G) \subseteq \{e \subseteq V \mid |e|=2\}$ of \emph{edges}. In this work we only consider undirected graphs. 
The \emph{size} $v(G)$ of a graph $G$ is the number of its vertices and by $\G_n$ we denote the set of all graphs with size at most $n\in\naturals$.
In the following $F$ and $G$ denote graphs, where $F$ represents a \emph{pattern} graph and $G$ a graph in our training set.
A \demph{homomorphism} $\singlehomomorphism: V(F) \to V(G)$ is a map that preserves edges, i.e. $\{v,w\} \in E(F) \Rightarrow  \{\singlehomomorphism(v), \singlehomomorphism(w)\} \in E(G)$.
Note that homomorphisms, unlike \emph{subgraph isomorphisms}, allow non-injectivity: multiple vertices of $F$ can be mapped to the same vertex of $G$, see \Cref{fig:homomorphism_example}.  Let $\homo(F,G)$ denote the number of homomorphisms from $F$ to $G$ and let $\phi_\G(G)= (\homo(F,G))_{F\in\G}$ denote the vector of homomorphism counts from each graph of a family of graphs $\G$ to $G$. We define the shorthand $\phi_n(G)=\phi_{\G_n}$(G). We also define the $\demph{homomorphism density}$ $t(F,G) = \homo(F,G)/v(G)^{v(F)}$, corresponding to the probability that a mapping from $V(F)$ to $V(G)$ drawn uniformly at random is a homomorphism. Similarly to $\phi$, we define $\psi_\G(G)=t(\G,G) = (t(F,G))_{F\in\G}$ and $\psi_n(G)=\psi_{\G_n}$.
An \demph{isomorphism} between two graphs $G$ and $G'$ is a bijection $I:V(G)\rightarrow V(G')$ such that $\{v,w\}\in E(G)$ if and only if \{$I(v),I(w)\}\in E(G')$. If there is an isomorphism between $G$ and $G'$, we say they are \demph{isomorphic} and denote it as $G\isomorphic G'$. We say that a probability distribution $\scD$ over a countable domain $\mathcal{X}$ has \emph{full support} if each $x\in \mathcal{X}$ has nonzero probability $\Pr_{X\sim \scD}(X=x)>0$.
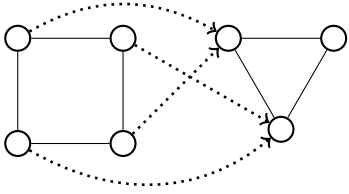
\begin{figure}[t!]
    \centering
    \begin{tikzpicture}[scale=.7]
        \begin{scope}[every node/.style={circle,thick,draw}]
\node (T1) at (4,4) {};
            \node (T2) at (6,4) {};
            \node (T3) at (5, 4-1.73) {};
            \draw (T1) -- (T2);
            \draw (T2) -- (T3);
            \draw (T1) -- (T3);
\node (A1) at (0,4) {};
            \node (A2) at (2,4) {};
            \node (A3) at (2,2) {};
            \node (A4) at (0,2) {};
            \draw (A1) -- (A2);
            \draw (A3) -- (A4);
            \draw (A1) -- (A4);
            \draw (A3) -- (A2);
            \draw [->, dotted, line width=0.35mm] (A1) to [out=30,in=150] (T1);
            \draw [->, dotted, line width=0.35mm] (A4) to [out=330,in=220] (T3);
            \draw [->, dotted, line width=0.35mm] (A3) to  (T1);
            \draw [->, dotted, line width=0.35mm] (A2) to (T3);
        \end{scope}
    \end{tikzpicture}
    \caption{Example homomorphism: mapping a 4-cycle to an edge.}
    \label{fig:homomorphism_example}
\end{figure}
 
\subsection{Complete Graph Embeddings}
\label{sec:methods}

Classical graph kernel and recent (neural) graph representation methods perform learning on graphs by (potentially implicitly) embedding them into a real vector space $\scH$. A \demph{graph embedding} is a map $\phi:\G\rightarrow \scH$ defined on a set of graphs $\G$. A graph embedding $\phi$ is called \demph{permutation-invariant} if for all  $G \isomorphic G' \in \G$ it holds that $\phi(G)=\phi(G')$. All common graph kernels \citep{kriege2020survey} and standard message-passing neural networks \citep{xu2018how} are permutation-invariant.  Now we define \demph{completeness}, which requires the opposite direction of the implication.
\begin{definition}
A permutation-invariant graph embedding $\phi:\G\rightarrow V$ is complete (on $\G$) if $\phi(G)\neq \phi(G')$ for all non-isomorphic $G,G'\in\G$.
\end{definition}
Completeness is necessary if we want to be \demph{universal}, that is, be able to approximate any permutation-invariant function $f:\G\rightarrow \reals$. In particular we would not be able to approximate a function $f$ with $f(G)\neq f(G')$ for two non-isomorphic graphs $G$ and $G'$ with $\phi(G)=\phi(G')$.

Complete graph embeddings allow to determine whether two graphs are isomorphic, as $G\simeq G'$ if and only if $\phi(G)=\phi(G')$. Deciding graph isomorphism is a classical problem in graph theory whose computational complexity is a major open problem \citep{babai2016graph}. While the problem is in NP, neither a  polynomial-time algorithm is known nor it is known whether the problem is NP-complete. Thus, we always face a trade-off between efficiency and expressiveness: complete graph embeddings are unlikely to be computable in polynomial time  \citep{gaertner2003kernel} and hence most graph representations drop completeness in favour of polynomial runtime.

If $\scH$ is a (real) Hilbert space with inner product $\dotprod{\cdot}{\cdot}\rightarrow\reals$, and not just a vector space, we can define a \demph{graph kernel} $k_\phi(G,G')= \dotprod{\phi(G)}{\phi(G')}$ using any permutation-invariant graph embedding $\phi:\G\rightarrow\scH$. We call $k_\phi$ \demph{complete} if $\phi$ is complete. Note that \begin{align}\label{eq:kernel_distance}
k_\phi(G,G)-2k_\phi(G,G')+k_\phi(G',G')=\|\phi(G)-\phi(G')\|^2
\end{align} which for a complete kernel is 0 if and only if $G\simeq G'$. Thus, evaluating a complete graph kernel is at least as hard as deciding graph isomorphism, even if $\phi$ is not known or computed explicitly \citep{gaertner2003kernel}. 

In this work, we avoid the previously mentioned trade-off by using random graph embeddings than can be computed in expected polynomial time. While dropping completeness, this allows us to keep a slightly weaker yet still desirable property: \emph{completeness in expectation}.

\section{Expectation-Complete Graph Embeddings}
\label{sec:main}

In the remainder of this work we will consider \emph{random graph embeddings}. These are graph embeddings $\phi_X:\G\rightarrow\scH$ that are parameterised by a random variable $X$. 
Algorithmically, we can think of $\phi_X(G)$ as first sampling a random variable $X\sim \scD$ from a distribution $\scD$ and then computing $\phi_X(G)$. 
If the expectation $\E_{X\sim\scD}[\phi_X(G)]$ is defined for all $G \in \G$, we can define a (deterministic) graph embedding $\E_{X\sim\scD}[\phi_X(\cdot)]:\G\rightarrow\scH$. 
This leads us to the central notion of this paper.
\begin{definition}
A random graph embedding $\phi_X$ is expectation-complete if the graph embedding $\E_X[\phi_X(\cdot)]$ is complete.
The corresponding kernel $k_X(G,G') = \langle \phi_X(G),\phi_X(G')\rangle$ is expectation-complete if $\phi_X$ is expectation-complete.
\end{definition}

Expectation-complete graph embeddings satisfy a useful property, which no non-complete deterministic graph embedding can satisfy: they eventually will be complete  if we sample often enough.

\begin{lemma}\label{lemma:repeat_until_injective}
Let $\phi_X:\G\rightarrow\scH$ be a expectation-complete graph embedding and $G,G'\in\G$ which are not isomorphic. For any $\delta>0$, there exists $L\in\naturals$ such that for all $\ell\geq L$ $$(\phi_{X_1}(G),\dots,\phi_{X_\ell}(G)) \neq (\phi_{X_1}(G'),\dots,\phi_{X_\ell}(G'))$$ with probability $1-\delta$, where $X_1,\dots,X_\ell\sim\scD$ i.i.d. 
\end{lemma}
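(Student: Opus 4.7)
The plan is to show that the two tuples of embeddings can only coincide if every single coordinate $\phi_{X_i}(G)$ equals $\phi_{X_i}(G')$, and that each such equality happens with probability strictly less than one. Then independence reduces the problem to a simple geometric tail bound.

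First I would argue the key inequality $p := \Pr_{X \sim \scD}(\phi_X(G) \neq \phi_X(G')) > 0$. Suppose for contradiction that $\phi_X(G) = \phi_X(G')$ almost surely under $\scD$. Then $\phi_X(G) - \phi_X(G') = 0$ almost surely, so $\E_X[\phi_X(G)] - \E_X[\phi_X(G')] = \E_X[\phi_X(G) - \phi_X(G')] = 0$, which contradicts the expectation-completeness of $\phi_X$ applied to the non-isomorphic pair $G, G'$. Hence $p > 0$.

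Next I would exploit independence. Since $X_1, \dots, X_\ell \sim \scD$ are i.i.d., the event that the two $\ell$-tuples coincide is exactly the intersection of the $\ell$ independent events $\{\phi_{X_i}(G) = \phi_{X_i}(G')\}$, each of probability $1-p$. Therefore
\[
    \Pr\bigl((\phi_{X_1}(G),\dots,\phi_{X_\ell}(G)) = (\phi_{X_1}(G'),\dots,\phi_{X_\ell}(G'))\bigr) = (1-p)^\ell.
\]
Choosing $L := \lceil \log(1/\delta) / \log(1/(1-p)) \rceil \in \naturals$ guarantees $(1-p)^L \leq \delta$. Since the sequence $(1-p)^\ell$ is non-increasing in $\ell$ (equivalently, the event that the tuples differ is monotone under appending more coordinates), we obtain the desired conclusion for every $\ell \geq L$ with probability at least $1-\delta$.

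The only nontrivial step is the first one, namely translating expectation-completeness into a pointwise positive-probability statement; but as shown above this is immediate once one observes that almost-sure coincidence of $\phi_X(G)$ and $\phi_X(G')$ would force the expectations to agree. Everything else is a routine geometric-tail calculation, so no further technical obstacle is anticipated.
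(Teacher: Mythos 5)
Your proof is correct and follows essentially the same route as the paper's: you extract a positive-probability event on which $\phi_X(G)\neq\phi_X(G')$ from expectation-completeness, then use independence to bound the coincidence probability by $(1-p)^\ell$ and solve for the same threshold $L=\left\lceil\log(1/\delta)/\log(1/(1-p))\right\rceil$. Your explicit contradiction argument (almost-sure equality would force equal expectations) just spells out what the paper states in one line.
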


\begin{proof}
Let $G,G'$ be non-isomorphic graphs. Since $\phi_X$ is expectation-complete, it must hold that $\E[\phi_X(G)]\neq \E[\phi_X(G')]$, which in particular means that there exists a set $A_{G,G'}$ of outcomes of $X$ with $\Pr(X\in A_{G,G'})=p_{G,G'}>0$ such that for all $a\in A_{G,G'}$ it holds that $\phi_a(G)\neq\phi_a(G')$. We need $\Pr(\exists i\in\{1,\dots,\ell\}: X_i\in A_{G,G'})\geq 1-\delta$, hence $1-(1-p_{G,G'})^\ell \geq 1-\delta$ must hold. Solving for $\ell$ we see that $\ell\geq L=\left\lceil\frac{\log(\sfrac{1}{\delta})}{\log(\frac{1}{1-p_{G,G'}})}\right\rceil$ is sufficient to guarantee that there will be at least one $X_i$ in $A$ with probability at least $1-\delta$, implying $\phi_{X_i}(G)\neq\phi_{X_i}(G')$.
\end{proof}

This leads to the following result, that sampling eventually yields universality.

\begin{theorem}
\label{thm:universal}
    Let $n\in\naturals$, $\phi_X:\G_n\rightarrow\reals^d$ be a finite-dimensional expectation-complete graph embedding and $f: \G_n \rightarrow \reals$ a permutation-invariant function. For any $\varepsilon>0$ and $\delta>0$ there exists an $\ell\in\naturals$ and a multi-layer-perceptron $g:\reals^{d \ell} \rightarrow \reals$ such that $$|f(G)-g(\phi_{X_1}(G),\dots,\phi_{X_\ell}(G))|<\varepsilon$$ for all $G\in\G_n$ with probability at least $1-\delta$, where $X_1,\dots,X_\ell\sim\scD$ i.i.d. 
\end{theorem}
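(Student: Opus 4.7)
The plan is to reduce the statement to a finite interpolation problem and combine Lemma~\ref{lemma:repeat_until_injective} with the classical universal approximation theorem for multi-layer perceptrons.

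First, observe that modulo isomorphism $\G_n$ contains only finitely many graphs; fix a complete set of representatives $G_1,\dots,G_N$. Since $\phi_X$ is permutation-invariant for every realisation of $X$, we have $\phi_{X_j}(G)=\phi_{X_j}(G_i)$ whenever $G\simeq G_i$, so it suffices to secure the approximation bound at the $N$ representatives.

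Next, apply Lemma~\ref{lemma:repeat_until_injective} to each of the $\binom{N}{2}$ pairs $(G_i,G_j)$ of non-isomorphic representatives with failure tolerance $\delta/\binom{N}{2}$, obtaining sample sizes $L_{ij}$. Setting $\ell := \max_{i<j} L_{ij}$ and taking a union bound over all pairs yields that, with probability at least $1-\delta$ over $X_1,\dots,X_\ell\sim\scD$ i.i.d., the tuples $p_i := (\phi_{X_1}(G_i),\dots,\phi_{X_\ell}(G_i))\in\reals^{d\ell}$ are pairwise distinct.

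Conditioning on this event, it only remains to construct an MLP $g:\reals^{d\ell}\to\reals$ satisfying $|g(p_i)-f(G_i)|<\varepsilon$ for all $i=1,\dots,N$. This is a finite interpolation problem on $N$ distinct points: extend the map $p_i\mapsto f(G_i)$ to any continuous function on a compact neighbourhood of $\{p_1,\dots,p_N\}$ in $\reals^{d\ell}$, and invoke the universal approximation theorem for MLPs on compact sets to approximate it uniformly within $\varepsilon$. Combined with permutation invariance this yields the claimed bound for every $G\in\G_n$.

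The main subtlety, rather than a deep technical obstacle, is the order of quantifiers: since the interpolation points $p_i$ depend on the sample, the MLP $g$ is naturally chosen after the $X_i$'s are realised. The theorem should therefore be read as ``there exists $\ell$ such that, with probability at least $1-\delta$ over $X_1,\dots,X_\ell\sim\scD$, there exists an MLP $g$ for which the bound holds simultaneously for every $G\in\G_n$.'' Under this reading no further difficulties arise; the only genuinely nontrivial ingredient is Lemma~\ref{lemma:repeat_until_injective}, which is already established.
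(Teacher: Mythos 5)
Your proposal is correct and follows essentially the same route as the paper: a union bound over all pairs of non-isomorphic graphs via Lemma~\ref{lemma:repeat_until_injective} to make the concatenated embedding injective with probability $1-\delta$, followed by universal approximation of a finite interpolation problem on the resulting distinct points. Your explicit remark on the order of quantifiers (the MLP $g$ being chosen after the sample is realised) matches the reading implicit in the paper's proof.
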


\begin{proof}
    Let $N=|\G_n|$, $\G_n=\{G_1,\dots,G_N\}$ and $f(G_i)=y_i$ for all $i$. As in the proof of $\Cref{lemma:repeat_until_injective}$ we know that for each pair $G,G'\in\G$ of non-isomorphic graphs there exists an event $A_{G,G'}$ with non-zero probability $p_{G,G'}$ guaranteeing that $\phi_X(G)\neq\phi_X(G')$. Let $p=\min_{G,G'} p_{G,G'}>0$. We have to satisfy this for all pairs of non-isomorphic graphs simultaneously. By applying a union bound on the complement (meaning at least one $A_{G,G'}$ does not happen) and bounding each of these terms through \Cref{lemma:repeat_until_injective}, we see that $$\ell\geq \frac{\log|\G_n|+\log(1/\delta)}{\log(\frac{1}{1-p})}$$ samples are sufficient to guarantee that the embedding $\phi_\ell(G)=(\phi_{X_1}(G),\dots,\phi_{X_\ell}(G))$ is complete with probability $1-\delta$. Note that $\log|\G_n|\leq n^2$, meaning that if we treat $p$ and $\delta$ as constants then $\scO(n^2)$ many samples suffice. 
    
    It remains to show that there is an MLP $g$ which can approximate the points $(\phi_\ell(G_1),y_1),\dots,(\phi_\ell(G_N),y_N)$. It is clear that there exists a multivariate polynomial exactly fitting all the points. Then we can apply universal function approximation to the bounded region spanned by the $N$ points and approximate the polynomial.
\end{proof}

\subsection{Expectation-Completeness Through Graph Homomorphisms}

We now present one way to achieve expectation-completeness. We use the classical result of \citet{lovasz1968vector} that all homomorphism counts up to $n=\max\{v(G),v(G')\}$ determine if $G$ and $G'$ are isomorphic.

\begin{theorem}[\citet{lovasz1968vector}\protect\footnote{For a more recent proof see  Theorem 5.29 and the comments below in \citet{lovasz2012book}.}]\label{thm:Lovászvector}
Two graphs $G,G'\in\G_n$ are isomorphic if and only if $\phi_n(G)=\phi_n(G')$.
\end{theorem}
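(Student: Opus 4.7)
The forward direction is immediate: if $I\colon V(G)\to V(G')$ is an isomorphism, then the map $\singlehomomorphism\mapsto I\circ\singlehomomorphism$ bijects homomorphisms $F\to G$ with homomorphisms $F\to G'$, hence $\homo(F,G)=\homo(F,G')$ for every $F\in\G_n$. The plan for the converse is to pass through \emph{injective} homomorphism counts $\mathrm{inj}(F,G)$ (i.e., homomorphisms that are injective on vertices) and relate them to $\homo$ via a triangular identity that can be inverted.

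Given a homomorphism $\singlehomomorphism\colon F\to G$, its fibres form a partition $\pi$ of $V(F)$; no block of $\pi$ can contain an edge of $F$ because $G$ is simple and loopless. The induced map $F/\pi\to G$ on the quotient graph is then an injective homomorphism, and conversely every injective homomorphism from $F/\pi$ to $G$ lifts uniquely to a homomorphism $F\to G$ whose fibre partition is $\pi$. Summing over $\pi$ yields
$$\homo(F,G) \;=\; \mathrm{inj}(F,G) \;+\; \sum_{\pi \text{ nontrivial}} \mathrm{inj}(F/\pi,\,G),$$
where the isolated first term corresponds to the all-singletons partition and every nontrivial quotient satisfies $v(F/\pi) < v(F) \le n$. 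Induction on $v(F)$ inverts this relation and expresses $\mathrm{inj}(F,G)$ as a $G$-independent integer linear combination of $\homo(F',G)$ with $F'\in\G_n$.

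Consequently, $\phi_n(G)=\phi_n(G')$ forces $\mathrm{inj}(F,G)=\mathrm{inj}(F,G')$ for every $F\in\G_n$. Choosing $F=K_1$ and $F=K_2$ already yields $v(G)=v(G')$ and $|E(G)|=|E(G')|$. Now set $F:=G$: the identity map shows $\mathrm{inj}(G,G)\ge 1$, so $\mathrm{inj}(G,G')\ge 1$ and an injective homomorphism $\phi\colon G\to G'$ exists. Since $v(G)=v(G')$, $\phi$ is a vertex bijection that injects $E(G)$ into $E(G')$; the equal edge counts upgrade this to an edge bijection, so $\phi$ is an isomorphism.

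The only non-routine step is the inversion: one must check that the coefficients expressing $\mathrm{inj}$ in terms of $\homo$ depend solely on $F$ and not on $G$, so that equality of $\homo(\cdot,G)$ and $\homo(\cdot,G')$ on $\G_n$ transfers to equality of $\mathrm{inj}(\cdot,G)$ and $\mathrm{inj}(\cdot,G')$ on $\G_n$. This is automatic from the unitriangular structure above (diagonal term $\mathrm{inj}(F,G)$ and strictly smaller quotients on the right), together with the closure property $v(F/\pi)\le v(F)\le n$ that keeps all quotients inside $\G_n$ throughout the induction.
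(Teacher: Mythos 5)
The paper does not prove this statement: it is imported verbatim from \citet{lovasz1968vector}, with the footnote pointing to Theorem 5.29 of \citet{lovasz2012book} for a modern proof. Your argument is a correct, self-contained reconstruction of that classical proof. The forward direction via post-composition with the isomorphism is right; the converse correctly uses the unitriangular identity $\homo(F,G)=\mathrm{inj}(F,G)+\sum_{\pi\ \text{nontrivial}}\mathrm{inj}(F/\pi,G)$ (summing over partitions with independent blocks), whose inversion by induction on $v(F)$ stays inside $\G_n$ because quotients only shrink, and whose coefficients are manifestly independent of $G$. Your endgame differs slightly from the textbook version: rather than producing injective homomorphisms in both directions, you take $F=G$ to get a single injective homomorphism $G\to G'$ and upgrade it to an isomorphism using $v(G)=v(G')$ and $|E(G)|=|E(G')|$ extracted from $F=K_1$ and $F=K_2$. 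That closing step is valid (a vertex bijection that injects an edge set into an edge set of equal finite cardinality is edge-surjective, hence reflects edges) and is, if anything, a bit more elementary. No gaps.
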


This provides a powerful graph embedding for learning tasks on graphs \citep{dell2018lov, hoang2020graph, barceloGraphNeuralNetworks2021}.
We can define a simple kernel on $\G_n$ with the canonical inner product using $\phi_n$.
\begin{definition}[Complete Lovász kernel]
Let $k_{\phi_n}(G,G') = \dotprod{\phi_n(G)}{\phi_n(G')}$.
\end{definition}
Note that $\phi_n$ and $k_{\phi_n}$ are both complete on $\G_n$, and hence can be used to distinguish non-isomorphic graphs of size up to $n$.
 We can use the Lovász vector embedding $\phi_n$ to devise graph embeddings that are expectation-complete. For that let $e_F\in\mathbb{R}^{\G_n}$ be the `$F$th' standard basis unit-vector of $\reals^{\G_n}$. For a distribution $\scD$ with full support on $\G_n$ define the graph embedding $\phi_F(G) = \hom(F,G)e_F$ with $F\sim\scD$. 
\begin{theorem}\label{thm:Lovasz_complete_exp}
For a distribution $\scD$ with full support on $\G_n$ and $F\sim \scD$, the random embedding $\phi_F(\cdot)$ and the corresponding kernel are expectation-complete on $\G_n$.
\end{theorem}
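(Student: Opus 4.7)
The plan is to reduce the claim directly to the Lovász characterisation recorded in \Cref{thm:Lovászvector} by computing the expected embedding $\E_{F\sim\scD}[\phi_F(\cdot)]$ coordinatewise. First I would note that, since $\phi_F(G)=\hom(F,G)\,e_F$ is supported on a single basis direction of $\reals^{\G_n}$, linearity of expectation gives a very clean formula in which the coordinate of $\E_{F\sim\scD}[\phi_F(G)]$ indexed by $H\in\G_n$ is exactly $\Pr_{F\sim\scD}(F=H)\cdot\hom(H,G)$. This is the main (and essentially the only) computation.

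The second step is to promote this identity to completeness of the deterministic embedding $\E[\phi_F(\cdot)]$. Assume $\E[\phi_F(G)]=\E[\phi_F(G')]$; comparing coordinates gives $\Pr_{F\sim\scD}(F=H)\,\hom(H,G)=\Pr_{F\sim\scD}(F=H)\,\hom(H,G')$ for every $H\in\G_n$. Here the \emph{full support} hypothesis on $\scD$ is doing all the work: it guarantees that each of these probabilities is strictly positive, so the scalars may be cancelled to yield $\hom(H,G)=\hom(H,G')$ for every $H\in\G_n$, i.e.\ $\phi_n(G)=\phi_n(G')$. Applying \Cref{thm:Lovászvector} then forces $G\simeq G'$, which is precisely completeness of $\E[\phi_F(\cdot)]$ on $\G_n$. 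The kernel statement follows immediately from the definition of expectation-completeness for kernels, which transfers from the embedding without further work.

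I do not anticipate any real obstacle: the argument reduces to ``full support $+$ Lovász''. The only subtlety worth mentioning is the indexing convention, since $\G_n$ simultaneously indexes the basis $(e_F)_F$ and supports $\scD$; because $\hom(\cdot,G)$ is isomorphism-invariant in its first argument this causes no difficulty, and the full-support hypothesis acts simply as a strictly positive componentwise rescaling of the Lovász vector $\phi_n(G)$, which preserves the injectivity granted by \Cref{thm:Lovászvector}.
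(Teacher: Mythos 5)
Your proposal is correct and follows essentially the same route as the paper: both compute the expected embedding coordinatewise as $\Pr(F=H)\hom(H,G)$, invoke full support to relate it to the Lovász vector $\phi_n$, and conclude via \Cref{thm:Lovászvector} (you argue the contrapositive, the paper argues the direct implication, but the content is identical). The kernel claim is indeed immediate from the paper's definition of an expectation-complete kernel, as you note.
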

\begin{proof}
Let $\phi_F$ with $F\sim \scD$ be as stated and $G\in\G_n$. Then 
$$g=\E_{F}[\phi_F(G)]=\sum_{F'\in\G_n} \Pr\left(F=F'\right)\homo(F',G)e_{F'}\,.$$
The vector $g$ has the entries $(g)_{F'}=\Pr\left(F=F'\right)\homo(F',G)$. Let $G'$ be a graph that is non-isomorphic to $G$ and let $g'=\E_{F}[\phi_F(G')]$ accordingly. By Theorem \ref{thm:Lovászvector} we know that $\phi_n(G)\neq \phi_n(G')$. Thus, there is an $F'$ such that $\homo(F',G)\neq \homo(F',G')$. By definition of $\scD$ we have that $\Pr(F=F')> 0$ and hence $\Pr(F=F')\homo(F',G)\neq \Pr(F=F')\homo(F',G')$ which implies $g\neq g'$. That shows that $\E_{F}[\phi_F(\cdot)]$ is complete and concludes the proof.
\end{proof}

We now analyse how close we are  to the actual Lovász kernel, if we sample $\ell$ patterns $\F = (F_1, \ldots, F_\ell)$ i.i.d. from $\scD$. We consider $\phi_\F = \sum_{F \in \F} \phi_F$ and the kernel $k_\F (G, G') = \langle \phi_\F (G), \phi_\F (G') \rangle$.
While formally working in $\reals^{\G_n}$, we can restrict the analysis (and practical computation) to $\reals^{\F}$, ignoring dimensions that only contain zeros.

We apply standard techniques similar to \citet{rahimi2007random}, \citet{kontorovich2009universal}, \citet{shervashidze2009efficient}, and \citet{wu2019scalable}. For convenience we will perform the analysis using the homomorphism densities $\psi_F$. Let $D\in\reals^{\G_n\times\G_n}$ be a diagional matrix with $D_{FF}=\Pr_{X\sim\scD}(X=F)$ and let $J_F\in\{0,1\}^{\G_n\times\G_n}$ be a matrix that is $1$ at the $FF$th position and $0$ everywhere else. For the expectation of the random kernel $\dotprod{\psi_F(G)}{\psi_F(G')}$ it holds that
\begin{align*}
    \E_{F\sim \scD}[\dotprod{\psi_F(G)}{\psi_F(G')}]&=\E_{F\sim \scD}[\psi^\mathsf{T}_{\G_n}(G) J_F\psi_{\G_n}(G')]\\
    &=\dotprod{\sqrt{D}\psi_{\G_n}(G)}{\sqrt{D}\psi_{\G_n}(G')}
    \\&=: k_\scD(G,G')\,.
\end{align*}
Note that $k_\scD(G,G')$ is still a complete kernel as the complete graph embedding $\psi_{\G_n}$ is just scaled by $\sqrt{D}$, which is invertible as $\scD$ has full support. 
For a sample $\F$ of $\ell$ patterns we get the joint (averaged) embedding $\psi_\F(G)=\sfrac{1}{\sqrt{\ell}}(t(F_1,G),\dots,t(F_\ell,G))$ and get the corresponding (averaged) kernel
$$\Tilde{k}_\F(G,G')=\dotprod{\psi_\F(G)}{\psi_\F(G')}=\frac{1}{\ell}\sum_{i=1}^\ell\psi_{F_i}(G)\psi_{F_i}(G') \ .$$ Applying a Hoeffding bound we get 
    $$\Pr\left(\left|\Tilde{k}_\F(G,G') - k_\scD(G,G')\right|>\varepsilon\right)
    \leq 2e^{-2\varepsilon^2\ell}\,.$$
Note that the previous bound holds for a fixed pair $G$ and $G'$. We can apply it to each pair in the training sample to get the following result.
\begin{theorem}\label{thm:lovasz_hoeffding_finite_sample}
    Let $\varepsilon,\delta\in(0,1)$, $\scD$ be a distribution on $\G_n$ with full support, and let $S\subseteq \G_n$ be a finite set of graphs. If we sample $\F =(F_1,\dots,F_\ell)\sim \scD^\ell$ i.i.d. with $$\ell=\scO\left(\frac{\log(\sfrac{|S|}{\delta})}{\varepsilon^2}\right)$$ we can guarantee that $$\max_{G,G'\in S}\left|\Tilde{k}_\F(G,G') - k_\scD(G,G')\right|< \varepsilon$$ with probability at least $1-\delta$.
\end{theorem}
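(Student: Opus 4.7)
The plan is to upgrade the single-pair Hoeffding bound derived immediately before the theorem into a uniform bound over all pairs in $S$, via a standard union bound. The pointwise bound already states that for any fixed $G, G' \in \G_n$,
$$\Pr\left(\left|\Tilde{k}_\F(G,G') - k_\scD(G,G')\right|>\varepsilon\right) \leq 2e^{-2\varepsilon^2\ell}.$$
Its validity rests on each summand $\psi_{F_i}(G)\psi_{F_i}(G')$ lying in $[0,1]$, which in turn follows from homomorphism densities being probabilities; I would flag this briefly at the start of the proof to make the Hoeffding hypothesis explicit.

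Next I would take a union bound over the set of pairs from $S$ that matter for the max, which has size at most $|S|^2$ (or $\binom{|S|}{2} + |S|$ if we exploit symmetry of $\tilde{k}_\F$ and $k_\scD$ in their arguments, but this only affects constants). This yields
$$\Pr\left(\max_{G,G'\in S}\left|\Tilde{k}_\F(G,G') - k_\scD(G,G')\right|>\varepsilon\right) \leq 2|S|^2 e^{-2\varepsilon^2\ell}.$$
Forcing the right-hand side to be at most $\delta$ and solving for $\ell$ gives
$$\ell \;\geq\; \frac{\log(2|S|^2/\delta)}{2\varepsilon^2} \;=\; \scO\!\left(\frac{\log(|S|/\delta)}{\varepsilon^2}\right),$$
which matches the bound in the statement.

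I do not expect a real obstacle: the argument is a textbook Hoeffding-plus-union-bound template, essentially the same style of argument used by \citet{rahimi2007random} and the other works cited above the theorem. The only modelling choice is whether to apply the union bound over ordered or unordered pairs; since $\tilde{k}_\F$ and $k_\scD$ are symmetric in their two arguments, using unordered pairs tightens the constant slightly but does not change the asymptotic sample complexity.
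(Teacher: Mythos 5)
Your proposal is correct and follows essentially the same route as the paper's proof: a union bound over the at most $|S|^2$ pairs combined with the pointwise Hoeffding bound stated just before the theorem, then solving $2|S|^2 e^{-2\varepsilon^2\ell}\leq\delta$ for $\ell$. Your added remark that the summands $\psi_{F_i}(G)\psi_{F_i}(G')$ lie in $[0,1]$ makes the Hoeffding hypothesis explicit, which the paper leaves implicit, but this is a presentational refinement rather than a different argument.
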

\begin{proof}
    We have to show that $$\Pr\left(\max_{G,G'\in S}\left|\Tilde{k}_\F(G,G') - k_\scD(G,G')\right|>\varepsilon\right)<\delta\,.$$
    By a union bound it is sufficient if
    $$\sum_{G,G'\in S}\Pr\left(\left|\Tilde{k}_\F(G,G') - k_\scD(G,G')\right|>\varepsilon\right)<\delta$$
    and by applying Hoeffding bound to each term in the sum get
    $|S|^2 2e^{-2\varepsilon^2\ell}<\delta$. Solving for $\ell$ yields that $\ell=\scO\left(\frac{\log(\sfrac{|S|}{\delta})}{\varepsilon^2}\right)$ is sufficient.
\end{proof}
\begin{corollary}
    Let $\varepsilon,\delta\in(0,1)$, $\scD$ be a distribution on $\G_n$ with full support. If we sample $\F = (F_1,\dots,F_\ell)\sim \scD^\ell$ i.i.d. with $$\ell=\scO\left(\frac{n^2+\log(\sfrac{1}{\delta})}{\varepsilon^2}\right)$$ we can guarantee that $$\max_{G,G'\in \G_n}\left|\Tilde{k}_\F(G,G') - k_\scD(G,G')\right|< \varepsilon$$ with probability at least $1-\delta$.
\end{corollary}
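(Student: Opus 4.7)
The plan is to deduce the corollary directly from \Cref{thm:lovasz_hoeffding_finite_sample} by specialising $S$ to the entire set $\G_n$ of graphs of size at most $n$, and then bounding $|\G_n|$ so that $\log|S|$ can be absorbed into an $n^2$ term.

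First, I would invoke \Cref{thm:lovasz_hoeffding_finite_sample} with $S = \G_n$, which yields that $\ell = \scO(\log(|\G_n|/\delta)/\varepsilon^2)$ samples suffice to get the claimed uniform deviation bound with probability at least $1-\delta$. Splitting the logarithm gives $\log(|\G_n|/\delta) = \log|\G_n| + \log(1/\delta)$, so it only remains to argue that $\log|\G_n| \leq n^2$ (up to constants absorbed in the $\scO$-notation).

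The bound on $|\G_n|$ is the one already used inside the proof of \Cref{thm:universal}: since every graph on at most $n$ vertices is determined, up to isomorphism, by some subset of the $\binom{n}{2}$ possible edges on a fixed vertex set of size $n$, one has $|\G_n| \leq \sum_{k=0}^{n} 2^{\binom{k}{2}} \leq (n+1)\cdot 2^{\binom{n}{2}} \leq 2^{n^2}$ for all $n \geq 1$, and thus $\log|\G_n| \leq n^2$. Substituting this into the previous expression gives $\ell = \scO((n^2 + \log(1/\delta))/\varepsilon^2)$, which is exactly the stated rate.

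There is essentially no obstacle here: the corollary is a one-line specialisation of \Cref{thm:lovasz_hoeffding_finite_sample} combined with a standard counting bound on graphs, and the $\scO$-notation hides the constants arising from the crude bound $|\G_n| \leq 2^{n^2}$. The only point worth being careful about is that \Cref{thm:lovasz_hoeffding_finite_sample} quantifies uniformity over a \emph{finite} set $S$, so I should explicitly note that $\G_n$ is finite (which follows from the same counting argument), ensuring the union bound step in the proof of that theorem remains valid when applied with $S = \G_n$.
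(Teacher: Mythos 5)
Your proposal is correct and follows exactly the paper's own argument: instantiate \Cref{thm:lovasz_hoeffding_finite_sample} with $S=\G_n$ and use the bound $|\G_n|\leq 2^{n^2}$ to absorb $\log|\G_n|$ into the $n^2$ term. The extra detail you give on the counting bound and on the finiteness of $\G_n$ is fine but not needed beyond what the paper states.
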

\begin{proof}
Apply \Cref{thm:lovasz_hoeffding_finite_sample} with $S=\G_n$. We upper bound the number of graphs with up to $n$ vertices as $|\G_n|\leq 2^{(n^2)}$.
\end{proof}
Hence, we achieve a bound for all graphs in $\G_n$ while sampling only $\scO(n^2)$ patterns.

While we stated the previously achieved bounds for kernels, we can easily transform them to bounds on the induced distances of the graph embeddings using \Cref{eq:kernel_distance}.
\begin{corollary}\label{cor:distance_bound}
     Let $\varepsilon,\delta\in(0,1)$, $\scD$ be a distribution on $\G_n$ with full support. If we sample $\F = (F_1,\dots,F_\ell)\sim \scD^\ell$ i.i.d. with $$\ell=\scO\left(\frac{n^2+\log(\sfrac{1}{\delta})}{\varepsilon^2}\right)$$ we can guarantee that for all $G,G'\in\G_n$ simultaneously
     $$\left|\|\psi_\F(G)-\psi_\F(G')\|^2 - \|\sqrt{D}(\psi_{n}(G) - \psi_{n}(G'))\|^2\right|< \varepsilon$$
     with probability at least $1-\delta$.
\end{corollary}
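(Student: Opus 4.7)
The plan is to reduce the uniform squared-distance bound to the uniform kernel-approximation bound that was just established in the preceding corollary, via the polarisation identity \eqref{eq:kernel_distance}. Since the corollary already gives the required sample complexity $\scO((n^2+\log(1/\delta))/\varepsilon^2)$ uniformly over all pairs, the remaining work is purely algebraic.

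First, I would expand both squared norms using \eqref{eq:kernel_distance}. On the sample side,
$$\|\psi_\F(G)-\psi_\F(G')\|^2 = \tilde{k}_\F(G,G) - 2\tilde{k}_\F(G,G') + \tilde{k}_\F(G',G'),$$
and on the population side the definition $k_\scD(G,G') = \dotprod{\sqrt{D}\psi_{\G_n}(G)}{\sqrt{D}\psi_{\G_n}(G')}$ immediately gives
$$\|\sqrt{D}(\psi_{n}(G)-\psi_{n}(G'))\|^2 = k_\scD(G,G) - 2k_\scD(G,G') + k_\scD(G',G').$$
Subtracting these two and regrouping, the difference of squared distances is a signed combination of three kernel differences $\tilde{k}_\F(\cdot,\cdot) - k_\scD(\cdot,\cdot)$ with coefficients $1, -2, 1$. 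A triangle inequality then bounds its absolute value by $4 \max_{H,H'\in\G_n}\left|\tilde{k}_\F(H,H') - k_\scD(H,H')\right|$.

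Finally, I would apply the preceding corollary with tolerance $\varepsilon/4$ and confidence $\delta$; this yields the uniform kernel bound $\max_{H,H'\in\G_n}|\tilde{k}_\F(H,H') - k_\scD(H,H')| < \varepsilon/4$ with probability at least $1-\delta$, using $\scO((n^2+\log(1/\delta))/(\varepsilon/4)^2) = \scO((n^2+\log(1/\delta))/\varepsilon^2)$ samples, which is the claimed rate since constant factors are absorbed. Multiplying through by $4$ gives the desired uniform distance bound over all pairs $G,G'\in\G_n$ simultaneously.

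There is essentially no obstacle: the uniform-over-pairs aspect is inherited for free from the corollary (whose proof already performs the union bound over $|S|^2 = |\G_n|^2$ pairs), and the factor of $4$ introduced by the polarisation identity is swallowed by the $\scO$-notation in the sample complexity. The only point that needs explicit verification is that the diagonal terms $\tilde{k}_\F(G,G)$ and $k_\scD(G,G)$ are covered by the uniform bound, which holds since the maximum in the corollary is taken over all pairs in $S\times S$, including $G=G'$.
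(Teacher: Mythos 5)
Your proposal is correct and follows exactly the route the paper indicates: it derives the distance bound from the preceding kernel-approximation corollary via the polarisation identity \eqref{eq:kernel_distance}, absorbing the resulting factor of $4$ into the $\scO$-notation. The expansion, the triangle-inequality step, and the observation that the diagonal terms are covered by the uniform maximum are all as intended.
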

Thus, our results apply not only to kernel methods, but also to learning methods that use the graph embedding directly, such as multilayer perceptrons.

\subsection{Graphs with Unbounded Size}\label{sec:minkernel}
In this section, we generalise the previous results to the set of all finite graphs $\G_\infty$.  Theorem~\ref{thm:Lovászvector} holds for $G, G' \in \G_\infty$ and the mapping $\phi_\infty$ that maps each $G \in \G_\infty$ to an infinite-dimensional vector. 
The resulting vector space, however, is not a Hilbert space with the usual inner product. 
To see this, consider any graph $G$ that has at least one edge. 
Then $\homo(P_n, G) \geq 2$ for every path $P_n$ of length $n \in \mathbb{N}$.
Thus, the inner product $\langle \phi_\infty(G), \phi_\infty(G) \rangle$ is not finite.

To define a kernel on $\G_\infty$ without fixing a maximum size of graphs, i.e., restricting to $\G_n$ for some $n \in \mathbb{N}$, we define the countable-dimensional vector
$\phi^\downarrow_\infty(G) = \left( \homo_{v(G)} (F, G)\right)_{F \in \G_\infty}$
  where 
\[ \homo_{v(G)}(F, G) = 
\begin{cases}
    \homo(F, G) & \text{if } v(F) \leq v(G)\,, \\
    0           & \text{if } v(F) >    v(G)\,. \\
\end{cases}
\]

That is, $\phi^\downarrow_\infty(G)$ is the projection of $\phi_{\infty}(G)$ to the subspace that gives us the homomorphism counts for all graphs of \emph{size at most of $G$}.
Note that this is a well-defined map of graphs to a subspace of the $\ell^2$ space, i.e., sequences $(x_i)_i$ over $\mathbb{R}$ with $\sum_i |x_i|^2 < \infty$.
Hence, the kernel given by the canonical inner product $k^\downarrow_\infty(G, G') = \langle \phi^\downarrow_\infty(G), \phi^\downarrow_\infty(G') \rangle_{\ell^2}$
is finite and positive semi-definite. Note that we can rewrite 
$k^\downarrow_\infty(G, G') = k_{\min}(G,G') = \langle \phi_{n'}(G), \phi_{n'}(G') \rangle$
where $n' = \min\{v(G), v(G')\}$. While the first hunch might be to count patterns up to $\max\{v(G),v(G')\}$, this is not necessary to guarantee completeness. \begin{restatable}{lemma}{MinKernel}
\label{lemma:min_kernel}
$k_{\min}$ is a complete kernel on $\G_\infty$.
\end{restatable}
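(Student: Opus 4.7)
The plan is to reduce completeness of the kernel to injectivity (up to isomorphism) of the underlying embedding $\phi^\downarrow_\infty$ and then handle two cases depending on whether the two graphs have the same size or not. Concretely, since $\phi^\downarrow_\infty$ maps into the Hilbert space $\ell^2$, the identity in \Cref{eq:kernel_distance} gives $k_{\min}(G,G) - 2 k_{\min}(G,G') + k_{\min}(G',G') = \|\phi^\downarrow_\infty(G) - \phi^\downarrow_\infty(G')\|^2$, so it suffices to show that $\phi^\downarrow_\infty(G) \neq \phi^\downarrow_\infty(G')$ whenever $G \not\simeq G'$.

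\textbf{Case 1: $v(G) = v(G') = n$.} By \Cref{thm:Lovászvector}, $G \not\simeq G'$ implies $\phi_n(G) \neq \phi_n(G')$, so there is some pattern $F$ with $v(F) \le n$ and $\homo(F,G) \neq \homo(F,G')$. Since $v(F) \le v(G)$ and $v(F) \le v(G')$, the truncation has no effect at coordinate $F$ for either graph, so $\phi^\downarrow_\infty(G)_F \neq \phi^\downarrow_\infty(G')_F$.

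\textbf{Case 2: $v(G) \neq v(G')$.} Without loss of generality assume $v(G) < v(G')$, and pick the coordinate indexed by $F = G'$. By definition of $\homo_{v(G)}$, since $v(F) = v(G') > v(G)$, we have $\phi^\downarrow_\infty(G)_{G'} = 0$. On the other hand, the identity map witnesses $\homo(G', G') \geq 1$, and $v(G') \le v(G')$, so $\phi^\downarrow_\infty(G')_{G'} = \homo(G',G') \geq 1$. Hence the two embeddings differ at coordinate $F = G'$.

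Combining the two cases gives $\phi^\downarrow_\infty(G) \neq \phi^\downarrow_\infty(G')$ whenever $G \not\simeq G'$, and therefore $k_{\min}$ is complete. The main conceptual point (rather than an obstacle) is Case 2: the asymmetric truncation at $\min\{v(G),v(G')\}$ might look lossy, but the ``diagonal'' coordinate $F=G'$ automatically separates graphs of different sizes because the larger graph always contributes a nonzero self-homomorphism count there while the smaller graph is forced to $0$ by the truncation; the rest of the argument is a direct invocation of Lovász's theorem.
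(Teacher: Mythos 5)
Your proof is correct and takes essentially the expected route: reduce completeness of $k_{\min}$ to injectivity (up to isomorphism) of $\phi^\downarrow_\infty$, invoke Lovász's theorem when $v(G)=v(G')$ (where the truncation is inactive on the relevant coordinate), and separate graphs of different sizes via the coordinate $F=G'$, which the truncation zeroes out for the smaller graph while $\homo(G',G')\geq 1$ for the larger one. As a minor remark, the second case admits an even simpler witness, $F=K_1$, since $\homo(K_1,G)=v(G)$ and $v(K_1)=1$ is never truncated.
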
The proof can be found in Appendix~\ref{section:appendixproofs}.

Given a sample of graphs $S$, we note that for $n = \max_{G \in S} v(G)$ we only need to consider patterns up to size $n$.\footnote{It is sufficient to go up to the size of the second largest graph.}
As the number of graphs of a given size $n$ is superexponential, it is impractical to compute all such counts. Hence, we propose to resort to sampling.

\begin{restatable}{theorem}{minKernelRandom}
\label{lem:mincomplete}
Let $\scD$ be a distribution on $\G_\infty$ with full support and $G\in\G_\infty$. Then $\phi^\downarrow_F(G) = \hom_{v(G)}(F,G)e_F$ with $F\sim \scD$ and the corresponding kernel are expectation-complete.
\end{restatable}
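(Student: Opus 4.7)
The plan is to mirror the proof of Theorem \ref{thm:Lovasz_complete_exp}, replacing the appeal to Lovász' classical characterisation (Theorem \ref{thm:Lovászvector}) with the completeness of the $k_{\min}$ kernel established in Lemma \ref{lemma:min_kernel}. By linearity of expectation, for any $G \in \G_\infty$ the expected embedding is
$$\E_F[\phi^\downarrow_F(G)] = \sum_{F'\in\G_\infty} \Pr(F=F')\,\homo_{v(G)}(F',G)\,e_{F'},$$
so the coordinate indexed by $F'$ equals $\Pr(F=F')\cdot\homo_{v(G)}(F',G)$. It thus suffices to produce, for any pair of non-isomorphic $G,G'\in\G_\infty$, some coordinate at which these two expectations differ.

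The key step is to extract a distinguishing pattern of small enough size. Since $G\not\simeq G'$, completeness of $k_{\min}$ together with \Cref{eq:kernel_distance} yields $\phi_{n'}(G)\neq\phi_{n'}(G')$ for $n'=\min\{v(G),v(G')\}$, so there exists $F^*\in\G_{n'}$ with $\homo(F^*,G)\neq\homo(F^*,G')$. Because $v(F^*)\leq n'\leq v(G)$ and $v(F^*)\leq n'\leq v(G')$, neither truncation in the definition of $\homo_{v(\cdot)}$ fires on $F^*$, and hence $\homo_{v(G)}(F^*,G)=\homo(F^*,G)$ and $\homo_{v(G')}(F^*,G')=\homo(F^*,G')$.

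Finally, full support of $\scD$ gives $\Pr(F=F^*)>0$, and multiplying two distinct values by the same positive scalar preserves the inequality. Therefore the $F^*$ coordinates of $\E_F[\phi^\downarrow_F(G)]$ and $\E_F[\phi^\downarrow_F(G')]$ differ, establishing expectation-completeness of $\phi^\downarrow_F$; the statement for the associated kernel is then immediate from the definition of expectation-completeness.

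The only nontrivial obstacle is the size argument in the second paragraph. A naive appeal to Lovász' theorem in $\G_\infty$ would only supply a distinguishing pattern of size up to $\max\{v(G),v(G')\}$, which could be zeroed out by the truncation on the smaller of the two graphs. The point of invoking Lemma \ref{lemma:min_kernel} is precisely that it yields a distinguishing pattern of the smaller size $n'$, so that both truncations are simultaneously inactive on $F^*$. Once this observation is in place, the remainder of the argument is a direct analogue of the proof of Theorem \ref{thm:Lovasz_complete_exp}.
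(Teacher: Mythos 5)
Your overall strategy is the right one and matches the natural adaptation of the proof of \Cref{thm:Lovasz_complete_exp} to the truncated embedding: compute the expectation coordinate-wise, use full support of $\scD$ to reduce the question to completeness of the underlying deterministic embedding, and invoke \Cref{lemma:min_kernel}. The conclusion you need at each step is true, and the kernel statement does follow immediately from the definition, as you say.

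The one step I would not accept as written is the claim that completeness of $k_{\min}$ ``together with \Cref{eq:kernel_distance}'' yields $\phi_{n'}(G)\neq\phi_{n'}(G')$ for $n'=\min\{v(G),v(G')\}$. Applying \Cref{eq:kernel_distance} to $k_{\min}=k^\downarrow_\infty$ gives $\|\phi^\downarrow_\infty(G)-\phi^\downarrow_\infty(G')\|^2>0$, i.e., the existence of some $F^*$ with $\homo_{v(G)}(F^*,G)\neq\homo_{v(G')}(F^*,G')$; it does not by itself place $F^*$ in $\G_{n'}$. When $v(G)<v(G')$ the three terms of that distance involve projections to different dimensions, so the positivity could a priori be carried entirely by coordinates with $n'<v(F^*)\leq v(G')$, where one side is truncated to zero. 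The repair is a one-liner (if $v(G)\neq v(G')$ the single-vertex pattern already distinguishes them and has size $1\leq n'$; otherwise $n'=v(G)=v(G')$ and \Cref{thm:Lovászvector} applies directly), but in fact the entire detour through a small pattern is unnecessary: the $F'$-coordinate of $\E_F[\phi^\downarrow_F(G)]$ is $\Pr(F=F')\,\homo_{v(G)}(F',G)$, so the expected embedding is exactly $\phi^\downarrow_\infty(G)$ rescaled coordinate-wise by strictly positive probabilities, and \emph{any} coordinate at which $\phi^\downarrow_\infty(G)$ and $\phi^\downarrow_\infty(G')$ differ---whether or not it is affected by truncation---already separates the two expectations. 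Since completeness of $\phi^\downarrow_\infty$ is precisely what \Cref{lemma:min_kernel} asserts, you are done without ever needing $v(F^*)\leq n'$.
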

The proof can be found in Appendix~\ref{section:appendixproofs}.

Note that $k_{\min}$ has the following interesting practical property. If we train a kernel-based classifier on a sample $S\subseteq\G_n$ and want to classify a graph with size larger than $n$ we do not have to recompute the embeddings $\phi^\downarrow_\infty(G)$ for $G\in S$ as the terms corresponding to patterns with size $>n$ in the kernel are zero anyway.

\section{Computing Embeddings in Expected Polynomial Time} \label{sec:computation}
An expectation-complete graph embedding should be efficiently computable to be practical, otherwise we could simply use deterministic complete embeddings. In this section, we describe our main result achieving polynomial runtime in expectation.
The best known algorithm \citep{DIAZ2002291} to exactly compute $\hom(F,G)$ takes time \begin{equation}\label{eq:hom_runtime}
    \scO(v(F)v(G)^{\tw(F)+1})
\end{equation} where $\tw(F)$ is the \emph{treewidth} of the pattern graph $F$. Thus, a straightforward sampling strategy to achieve polynomial runtime in expectation is to give decreasing probability mass to patterns with higher treewidth. Unfortunately, in the case of $\G_\infty$, this is not possible.
\begin{restatable}{proposition}{impossiblityLemma}
\label{lemma:impossiblity}
There exists no distribution $\scD$ with full support on $\G_\infty$  such that the expected runtime of Eq. (\ref{eq:hom_runtime}) becomes polynomial in $v(G)$ for all $G\in\G_\infty$.
\end{restatable}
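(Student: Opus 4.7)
The plan is to derive a contradiction by exhibiting a family of patterns whose contribution to the expectation forces super-polynomial growth in $v(G)$. Suppose, towards contradiction, that there is a distribution $\scD$ with full support on $\G_\infty$ and a polynomial $P$ such that $\E_{F \sim \scD}[v(F) \cdot v(G)^{\tw(F)+1}] \le P(v(G))$ for every $G \in \G_\infty$.

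First I would single out the complete graphs $K_n$ for $n \in \naturals$ as a ``worst-case treewidth'' witness family: $v(K_n) = n$ and $\tw(K_n) = n-1$. By the full-support assumption, the probabilities $p_n := \Pr_{F \sim \scD}(F \isomorphic K_n)$ are strictly positive for every $n$. For any fixed $m \in \naturals$, plugging $G = K_m$ into the assumed bound and retaining only the $K_n$-terms in the expectation gives
\begin{equation*}
\sum_{n=1}^{\infty} p_n \cdot n \cdot m^{n} \;\le\; \E_{F \sim \scD}\bigl[v(F)\, v(K_m)^{\tw(F)+1}\bigr] \;\le\; P(m)\,.
\end{equation*}

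Now I would pick $n$ strictly larger than the degree $d$ of $P$, say $n = d+1$. Dropping all other summands on the left yields $p_{d+1} \cdot (d+1) \cdot m^{d+1} \le P(m)$ for all $m \in \naturals$. Since $p_{d+1} > 0$ is a constant independent of $m$, the left-hand side grows like $m^{d+1}$ while the right-hand side is $\scO(m^d)$, which is the desired contradiction.

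The argument is essentially a one-line asymptotic comparison, so there is no serious technical obstacle; the main conceptual point to verify is that $K_n$ really has treewidth $n-1$ (a standard fact) and that full support really forces $p_n > 0$ for arbitrarily large $n$. A minor subtlety is that the stated runtime $\scO(v(F) v(G)^{\tw(F)+1})$ hides a constant, but since that constant is independent of $F$ and $G$ it can be absorbed into $P$ without affecting the argument. One could alternatively use any family with unbounded treewidth and at least logarithmic decay in the number of vertices (e.g.\ $(n \times n)$-grids), but the complete-graph family gives the tightest and cleanest instantiation.
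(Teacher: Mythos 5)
Your proof is correct and is essentially the standard (and, as far as I can tell, the intended) argument: full support forces positive mass on patterns of unbounded treewidth, and a single pattern class of treewidth exceeding the degree of the claimed polynomial already makes the expectation super-polynomial. The choice of $K_{d+1}$ as witness, the handling of the hidden constant, and the observation that only a lower bound on the expectation is needed are all sound.
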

The proof can be found in Appendix~\ref{section:appendixproofs}.

To resolve this issue we have to take the size of the largest graph in our sample into account. For a given sample $S\subseteq \G_n$ of graphs, where $n$ is the maximum number of vertices in $S$, we can construct simple distributions achieving polynomial time in expectation.
\begin{restatable}{theorem}{completeRandomPolynomial}\label{thm:completeRandomPolynomial}
There exists a distribution $\scD$ with full support on $\G_n$ such that computing the expectation-complete graph embedding $\phi^\downarrow_F(G)$ with $F\sim\scD$ takes polynomial time in $v(G)$ in expectation for all $G\in\G_n$.
\end{restatable}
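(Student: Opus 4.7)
The plan is to construct $\scD$ as an explicit two-stage sampling scheme: first draw a treewidth $k$ with exponentially decreasing mass in $k$, then draw a pattern uniformly among all graphs in $\G_n$ of treewidth exactly $k$. This decoupling is what makes it possible to suppress high-treewidth patterns enough to beat the $v(G)^{\tw(F)+1}$ factor in \eqref{eq:hom_runtime} without having to control the (potentially superexponentially many) graphs of a given treewidth one by one.

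Concretely, for each $k\in\{0,1,\dots,n-1\}$ let $\G_n^{(k)}=\{F\in\G_n:\tw(F)=k\}$ and $N_k=|\G_n^{(k)}|$; each $\G_n^{(k)}$ is nonempty since the $(k{+}1)$-clique lies in $\G_n$ and has treewidth $k$. I would set $p_k = n^{-k}/Z$ with $Z=\sum_{j=0}^{n-1} n^{-j} = \Theta(1)$ and define $\scD$ by $\Pr(F) = p_{\tw(F)}/N_{\tw(F)}$. Every $F\in\G_n$ then has strictly positive probability, so $\scD$ has full support, and expectation-completeness of $\phi^\downarrow_F$ follows directly from \Cref{lem:mincomplete}. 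For the runtime bound, using $v(F)\le n$ and the cost \eqref{eq:hom_runtime} of computing $\hom(F,G)$, the expected cost of computing $\phi^\downarrow_F(G)$ is at most
\begin{equation*}
\sum_{k=0}^{n-1} p_k \cdot \scO\!\left(n\, v(G)^{k+1}\right) \;=\; \scO\!\left(\tfrac{n\, v(G)}{Z}\right)\sum_{k=0}^{n-1}\!\left(\tfrac{v(G)}{n}\right)^{k} \;\leq\; \scO(n^2 v(G))\,,
\end{equation*}
since $v(G)\le n$ makes each summand in the geometric sum at most $1$, so the sum is bounded by $n$.

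The main obstacle, highlighted by \Cref{lemma:impossiblity}, is reconciling full support with polynomial expected runtime. A naive scheme that weights each individual pattern by $c^{-\tw(F)}$ fails because there can be superexponentially many patterns sharing a fixed treewidth, so the normalization would either crush the weight of any particular pattern or let the expected runtime blow up. The two-stage construction sidesteps this by separating the treewidth-dependent weighting (which tames the factor $v(G)^{\tw(F)+1}$) from the choice of pattern within a treewidth class (which only needs to give full support); the uniform choice is convenient but any full-support distribution on $\G_n^{(k)}$ would serve equally well.
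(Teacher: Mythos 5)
Your construction is correct as an existence proof and rests on the same core idea as the paper's: condition on treewidth, let the marginal over treewidth decay geometrically at rate roughly $1/n$ so that the $v(G)^{k+1}$ factor in Eq.~(\ref{eq:hom_runtime}) is tamed in expectation, and note that full support only requires positivity within each treewidth class. Your arithmetic giving $\scO(n^2 v(G))$ is right, and expectation-completeness follows from the restriction of \Cref{lem:mincomplete} to $\G_n$. Where you diverge from the paper is the second sampling stage, and the difference matters for what the theorem is meant to deliver. The paper draws a treewidth \emph{upper bound} $k$ (e.g.\ from a Poisson with $\lambda\le\frac{1+d\log n}{n}$), then a random $k$-tree, then a random subgraph of it; this distribution is not only full-support but \emph{explicitly samplable}, and the sampled pattern arrives together with a width-$k$ tree decomposition, which is what the $\scO(v(F)v(G)^{\tw(F)+1})$ algorithm actually needs as input. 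Your distribution --- uniform over the class of graphs of \emph{exact} treewidth $k$ --- is a well-defined mathematical object, but it is not known to be efficiently samplable (computing treewidth is NP-hard, and the class sizes $N_k$ are not available), and a drawn $F$ comes with no tree decomposition, so realising the runtime of Eq.~(\ref{eq:hom_runtime}) would require an additional treewidth computation whose cost your expectation does not account for. Since the paper's algorithmic reading of a random embedding (\Cref{sec:main}) is ``first sample $X\sim\scD$, then compute $\phi_X(G)$'', and the point of \Cref{sec:computation} is practicality, the sampling step arguably counts toward the runtime; under that reading your argument proves existence of the distribution but not of a polynomial-expected-time procedure. Swapping ``uniform over treewidth exactly $k$'' for ``random subgraph of a random $k$-tree'' closes this gap without changing your runtime calculation, since only the marginal over $k$ enters the bound.
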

\begin{proofsketch} We first draw a treewidth upper bound $k$ from an appropriate distribution. For example, to satisfy a runtime of $\scO(v(G)^{d+1})$ in expectation for some constant $d\in\naturals$, a Poisson distribution with $\lambda \leq \frac{1+d\log n}{n}$ is sufficient for any $G\in\G_n$. We have to ensure that each possible graph with treewidth up to $k$ gets a nonzero probability of being drawn. For that we first draw a $k$-tree---a maximal graph of treewidth $k$---and then take a random subgraph of it. See \Cref{section:appendixproofs} for the full proof.
\end{proofsketch}

We do not require that the patterns are sampled uniformly at random. It merely suffices that each pattern has a nonzero probability of being drawn.
We get a similar result for our random Lovász embedding.

\begin{restatable}{theorem}{completeRandomPolynomialMaxKern}\label{thm:completeRandomPolynomialMaxKern}
There exists a distribution $\scD$ with full support on $\G_n$ such that computing the expectation-complete graph embedding $\phi_F(G)$ with $F\sim\scD$ takes polynomial time in $v(G)$ in expectation for all $G\in\G_n$.
\end{restatable}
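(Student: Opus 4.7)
The plan is to observe that the very distribution $\scD$ already constructed in the proof of Theorem~\ref{thm:completeRandomPolynomial} works verbatim for $\phi_F$ as well. The two embeddings $\phi_F$ and $\phi^\downarrow_F$ differ only in that $\phi^\downarrow_F$ sets the coordinate to zero when $v(F) > v(G)$; but since we are working on $\G_n$, both $v(F)$ and $v(G)$ are bounded by $n$, so this truncation saves at most a constant factor and does not affect the asymptotic runtime of the homomorphism-counting algorithm behind Eq.~(\ref{eq:hom_runtime}).

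Concretely, I would reuse the two-stage sampling scheme from the proof sketch above: draw a treewidth bound $k$ from a Poisson distribution with parameter $\lambda \leq (1 + d \log n)/n$ (truncated at $n-1$), sample a $k$-tree $T$ on at most $n$ vertices, and finally take a uniformly random subgraph $F$ of $T$. As already argued for Theorem~\ref{thm:completeRandomPolynomial}, every graph in $\G_n$ of treewidth at most $k$ arises as a subgraph of some $k$-tree, and the Poisson draw assigns positive probability to every nonnegative integer, so $\scD$ has full support on $\G_n$.

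Expectation-completeness of $\phi_F$ is then an immediate application of Theorem~\ref{thm:Lovasz_complete_exp}, whose only hypothesis is that $\scD$ has full support on $\G_n$. For the runtime, computing $\hom(F,G)$ takes time $\scO(v(F) v(G)^{\tw(F)+1}) = \scO(n \cdot v(G)^{\tw(F)+1})$ since $v(F) \leq n$. Taking expectation over $F$, using $\tw(F) \leq K$ for $K \sim \Poi(\lambda)$ and the moment generating function identity $\E[v(G)^K] = e^{\lambda(v(G)-1)}$, we obtain
\[
\E_F\bigl[v(G)^{\tw(F)+1}\bigr] \leq v(G)\cdot e^{\lambda(v(G)-1)} \leq v(G)\cdot e^{1 + d \log n}\,,
\]
which is polynomial in $v(G)$ (with $n$ and $d$ absorbed into the constants of the distribution). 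The expected total runtime is therefore polynomial in $v(G)$, as required.

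The main obstacle one might expect — that dropping the truncation in $\phi^\downarrow_F$ introduces larger patterns or higher treewidths — does not arise here, precisely because $\scD$ is supported on $\G_n$ to begin with. Consequently the result is obtained as a direct corollary of Theorem~\ref{thm:completeRandomPolynomial} with essentially no new analysis; the only work is to note that the runtime bound for $\hom(F,G)$ does not depend on whether $v(F) \leq v(G)$ or not.
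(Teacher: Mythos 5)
Your proposal is correct and follows essentially the same route as the paper: reuse the Poisson/$k$-tree/random-subgraph construction from Theorem~\ref{thm:completeRandomPolynomial}, get full support on $\G_n$ and hence expectation-completeness from Theorem~\ref{thm:Lovasz_complete_exp}, and control the expected runtime via the Poisson moment generating function. Two points are glossed over, however. First, stopping at $e^{\lambda(v(G)-1)}\leq e^{1+d\log n}=e\,n^{d}$ and ``absorbing $n$ into the constants'' gives a much weaker bound than the $\scO(v(G)^{d+1})$ targeted in the paper's proof sketch; the intended step is to use that $\ln x/x$ is decreasing on $[e,\infty)$, so that $\lambda(v(G)-1)\leq(1+d\log n)\,v(G)/n\leq 1+d\log v(G)$ for $3\leq v(G)\leq n$, hence $\E\left[v(G)^{K}\right]\leq e\,v(G)^{d}$ with an absolute constant. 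Second, your claim that dropping the truncation ``saves at most a constant factor'' is not quite accurate: for $\phi^\downarrow_F$ the factor $v(F)$ in Eq.~(\ref{eq:hom_runtime}) is bounded by $v(G)$ because the count is only evaluated when $v(F)\leq v(G)$, whereas for $\phi_F$ a sampled pattern may have up to $n$ vertices even when $v(G)$ is constant, so the factor you lose is $n/v(G)$ rather than $O(1)$ and the honest bound is $\scO(n\cdot v(G)^{d+1})$. This still meets the literal statement (a fixed-degree polynomial in $v(G)$ whose coefficient, like $\scD$ itself, may depend on $n$), and the extra $n$ can be removed by additionally making the pattern-size distribution decay (e.g.\ geometrically), but it is the one place where the analysis for $\phi_F$ genuinely differs from that for $\phi^\downarrow_F$.
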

The proof can be found in \Cref{section:appendixproofs}.

Combining these results with \Cref{thm:lovasz_hoeffding_finite_sample}, we see that for any fixed $\delta$ and $\varepsilon$ we need in total an expected polynomial runtime to construct the embedding $\phi_\F$ with $\F=(F_1,\dots,F_\ell)$ with $F_i\sim\scD$ for all $i\in\{1,\dots,\ell\}$ and $\ell$ as in \Cref{thm:lovasz_hoeffding_finite_sample}.

\section{Practical Application}
\label{sec:practical}

So far, we have restricted our discussion to graphs without node attributes.
However, many real world datasets have attributes on their vertices and edges.We now discuss how to apply our embedding and kernel in such contexts.

It is conceptually possible to devise sampling schemes and corresponding distributions $\scD$ over graphs with discrete vertex and edge labels.
However, in practice this tends to result in unusable probabilities.
For any pattern $F$, a single edge with labeled endpoints which are not connected in $G$ results in $\homo(F,G) = 0$.
Hence, the resulting graph embeddings $\phi_\F$ become very sparse and practically uninformative.

We instead propose to consider labeled graphs as unlabeled for the purpose of homomorphism counting and suggest to include attribute information by applying a message passing graph neural network (GNN). 
Combining any GNN graph level representation with our embedding for a fixed set of sampled patterns $\F$ as shown in Figure~\ref{fig:architecture} is straightforward and allows to make any GNN architecture more expressive.
In particular the direct sum of $\phi_\F$ and the GNN representation is expectation-complete on attribute-free graphs; a property that the GNN representation alone does not posess.
Theorem~\ref{thm:universal} then implies that we can approximate any function on $\G_n$ using a suitable MLP with high probability.

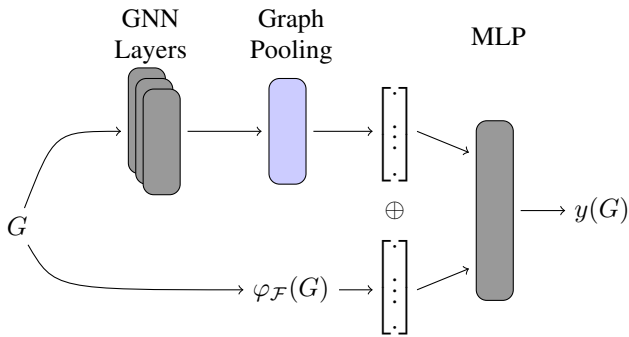
\begin{figure}

\begin{tikzpicture}[x=28pt, y=40pt]

\node (G) at (-1.5, -0.5) {$G$};

\coordinate (x) at (0,0);
\draw[black, fill=gray!80, rounded corners] ($ (x) + (0,0) $) rectangle ($ (x) + (0.5,1) $);

\node[align=center] at (0.3, 1.3) {GNN \\ Layers};

\draw[->] (G) .. controls (-1,0.4) .. (-0.1,0.4); 

\coordinate (x) at (0.1,-0.1);
\draw[black, fill=gray!80, rounded corners] ($ (x) + (0,0) $) rectangle ($ (x) + (0.5,1) $);

\coordinate (x) at (0.2,-0.2);
\draw[black, fill=gray!80, rounded corners] ($ (x) + (0,0) $) rectangle ($ (x) + (0.5,1) $);

\coordinate (x) at (1.9,-0.1);
\draw[black, fill=blue!20, rounded corners] ($ (x) + (0,0) $) rectangle ($ (x) + (0.5,1) $);

\node[align=center] at (2.2, 1.3) {Graph \\ Pooling};
\draw[->] (0.8,0.4) -- (1.8,0.4);
\draw[->] (2.5,0.4) -- (3.3,0.4);

\node at (3.6,0.25) {$ \begin{blockarray}{[c]}
\cdot \\
\vdots \\
\cdot \\
\end{blockarray} $};

\node (phi) at (2.2,-1.1) {$\phi_\F (G)$};
\draw[->] (G) .. controls (-1,-1.1) .. (phi);
\draw[->] (phi) -- (3.3, -1.1);

\node at (3.6,-1.2) {$ \begin{blockarray}{[c]}
\cdot \\
\vdots \\
\cdot \\
\end{blockarray} $};

\node at (3.6, -0.35) {$\oplus$};

\coordinate (x) at (4.7,-1.2);
\draw[black, fill=gray!80, rounded corners] ($ (x) + (0,0) $) rectangle ($ (x) + (0.5,1.7) $);
\node at (5,1.3) {MLP}; 

\draw[->] (3.9,0.4) -- (4.6,0.2);
\draw[->] (3.9,-1.1) -- (4.6,-0.9);

\node (y) at (6.4,-0.35) {$y(G)$};

\draw[->] (5.3,-0.35) -- (y);

\end{tikzpicture}

 \caption{Architecture of combining expectation-complete embeddings with MPNN representations for graph learning.}
\label{fig:architecture}
\end{figure}
 \section{Discussion and Related Work}
\label{sec:related}
\paragraph{$k$-WL test}
The $k$-dimensional Weisfeiler-Leman (WL) test\footnote{This refers to the \emph{folklore} $k$-WL test, also called $k$-FWL.} \citep{cai1992optimal} and the Lovász vector restricted to the set $\mathcal{T}_k$ of graph patterns with treewidth up to $k$ are equally expressive \citep{dvovrak2010recognizing, dell2018lov}, that is, they distinguish the same non-isomorphic graphs. \citet{puny2023equivariant} discuss this relationship in the context of invariant polynomials.
We now propose a random graph embedding with expected polynomial runtime that matches the expressiveness of $k$-WL in expectation. The same holds for MPNNs and $k$-GNNs, as their expressiveness is bounded by $k$-WL \citep{xu2018how, morris2019weisfeiler}. Let $\scD$ be a distribution with full support on $\mathcal{T}_k$ and  $\phi_F^{k\text{-WL}}(\cdot)$ be the resulting random graph embedding where $F\sim \scD$. \begin{restatable}{theorem}{completekWL}\label{thm:kwlinexpectation}
 The graph embedding $\phi_F^{k\text{-WL}}(\cdot)$ has the same expressiveness as the $k$-WL test in expectation for any $\scD$ that has full support on $\mathcal{T}_k$. Furthermore, there is a specific such distribution $\scD$ such that we can compute $\phi_F^{k\text{-WL}}(G)$ in expected polynomial time $\scO(v(G)^{k+1})$ for all $G\in\G_\infty$.
\end{restatable}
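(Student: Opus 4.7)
The plan splits cleanly along the two assertions of the theorem.

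For the expressiveness claim, the key external tool is the classical result of \citet{dvovrak2010recognizing} (see also \citet{dell2018lov}): two graphs $G,G' \in \G_\infty$ are indistinguishable by $k$-WL if and only if $\hom(F,G) = \hom(F,G')$ for every $F \in \mathcal{T}_k$. Given this, I would mirror the coordinate-wise argument from the proof of Theorem~\ref{thm:Lovasz_complete_exp}. If $k$-WL separates $G$ and $G'$, some $F^* \in \mathcal{T}_k$ gives $\hom(F^*,G) \neq \hom(F^*,G')$; because $\scD$ has full support, the $F^*$-coordinate of $\E_F[\phi_F^{k\text{-WL}}(\cdot)]$, which equals $\Pr(F=F^*)\hom(F^*,\cdot)$, separates $G$ and $G'$ as well. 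Conversely, if $k$-WL fails to distinguish $G$ and $G'$, then every coordinate of the expected embedding agrees on them. Hence $\E_F[\phi_F^{k\text{-WL}}(\cdot)]$ and $k$-WL distinguish exactly the same pairs of graphs, for every full-support $\scD$ on $\mathcal{T}_k$.

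For the runtime claim, I would construct a specific $\scD$ in three stages, in the spirit of the sketch of Theorem~\ref{thm:completeRandomPolynomial}. First, sample a target size $m \in \naturals$ from a distribution $\mu$ on $\naturals$ with full support and finite mean (e.g.\ a geometric distribution). Second, sample a labelled $k$-tree $T$ on $m$ vertices from a distribution with full support on all such $k$-trees; for $m \geq k+1$ this can be done by the standard recursive construction (start from $K_{k+1}$ and repeatedly attach a new vertex to a $k$-clique selected uniformly at random), and for smaller $m$ by direct enumeration. Third, output a subgraph $F$ of $T$ obtained by keeping each edge independently with probability $\tfrac{1}{2}$. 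Because every graph of treewidth at most $k$ arises as a spanning subgraph of some $k$-tree on the same vertex set, this scheme assigns positive probability to every $F \in \mathcal{T}_k$. Using Equation~(\ref{eq:hom_runtime}) and $\tw(F) \leq k$, the expected cost of computing $\hom(F,G)$ is bounded by
\begin{equation*}
\E_{F\sim\scD}\!\bigl[v(F)\cdot v(G)^{\tw(F)+1}\bigr] \;\leq\; \E_{m\sim\mu}[m]\cdot v(G)^{k+1} \;=\; \scO\!\bigl(v(G)^{k+1}\bigr),
\end{equation*}
uniformly in $G \in \G_\infty$, as required.

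The main obstacle I anticipate is verifying the full-support claim: I must argue that the three-stage sampler can in principle produce \emph{every} graph of treewidth at most $k$, not merely some representatives. This decomposes into two standard-but-delicate ingredients: (i) every graph on $m$ vertices with treewidth at most $k$ embeds as a subgraph of some $k$-tree on $m$ vertices, which follows from completing any width-$k$ tree decomposition to a maximal one; and (ii) the recursive $k$-tree sampler reaches every $k$-tree on $m$ vertices with positive probability, which holds because every $k$-tree admits a \emph{perfect elimination ordering} that the recursion can follow. Once (i) and (ii) are established, the $\tfrac{1}{2}$-edge-retention step hits any prescribed subgraph with positive probability, which together with the size distribution $\mu$ having full support on $\naturals$ closes the argument.
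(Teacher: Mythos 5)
Your proposal is correct and follows essentially the route the paper intends: the expressiveness claim via the equivalence of $k$-WL and homomorphism counts from $\mathcal{T}_k$ \citep{dvovrak2010recognizing, dell2018lov} combined with the coordinate-wise full-support argument of \Cref{thm:Lovasz_complete_exp}, and the runtime claim via sampling a $k$-tree and taking a random subgraph, exactly as in the paper's sketch of \Cref{thm:completeRandomPolynomial}. You also correctly identify the one point that makes this work on all of $\G_\infty$ despite \Cref{lemma:impossiblity}, namely that with treewidth capped at $k$ the pattern size enters the runtime of Eq.~(\ref{eq:hom_runtime}) only linearly, so any finite-mean size distribution suffices.
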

\Cref{lemma:impossiblity} does not apply to the embedding $\phi_F^{k\text{-WL}}(\cdot)$. In particular, the used distribution, which guarantees expected polynomial runtime, is independent of $n$ and can be used for all $\G_\infty$.

As before, we can state Hoeffding-based bounds to approximate how close we are to the full embedding $\phi_{\mathcal{T}_k}$.
\citet{morris2017glocalized} achieved similar bounds by sampling the $k$-tuple neighbourshoods of the $k$-WL test instead of the homomorphism counts.
\paragraph{Homomorphism-based graph embeddings.}
\citet{dell2018lov} proposed a complete graph kernel based on homomorphism counts related to our $k_{\min}$ kernel. Instead of implicitly restricting the embedding to only a finite number of patterns, as we do, they weigh the homomorphism counts such that the inner product defined on the whole Lovász vectors converges. However, \citet{dell2018lov} do not discuss how to compute their kernel and so, our approach can be seen as an efficient sampling-based alternative to their theoretical weighted kernel. 

Using graph homomorphism counts as a feature embedding for graph learning tasks was proposed by \citet{hoang2020graph} and \citet{kuehner2021graph}. \citet{hoang2020graph} discuss various aspects of homomorphism counts important for learning tasks, in particular, universality aspects and their power to capture certain properties of graphs, such as bipartiteness. Instead of relying on sampling patterns, which we use to guarantee expectation in completeness, they propose to use a small number of fixed pattern graphs. This limits the practical usage of their approach due to computational complexity reasons. In their experiments the authors only use tree (GHC-tree(6)) and cycle patterns (GHC-cycle(8)) up to size 6 and 8, respectively, whereas we allow patterns of arbitrary size and treewidth, guaranteeing polynomial runtime in expectation. Similarly to \citet{hoang2020graph}, we use the computed embeddings as features for a kernel SVM (with RBF kernel) and an MLP. For first results using an SVM, see our preliminary work at \citet{welke2022expectation} and \citet{thiessen2022expectation}. 

Instead of embedding the whole graph into a vector of homomorphism counts, \citet{barceloGraphNeuralNetworks2021} proposed to use rooted homomorphism counts as node features in conjunction with a graph neural network (GNN). They discuss the required patterns to be as or more expressive than the $k$-WL test. We achieve this in expectation when selecting an appropriate sampling distribution, as discussed above.

\paragraph{Cut distance}
The distance induced by the Lovász vector of all homomorphism counts is strongly related to the \emph{cut distance} \citep{borgs2006graph, lovasz2012book}. The cut distance is a well-studied and important distance on graphs that captures global structural but also sampling-based local information. It is well known that the distance given by homomorphism counts is close to the cut distance and hence has similar favourable properties. The cut distance, and hence homomorphism counts, capture the behaviour of all permutation-invariant functions on graphs. Using \Cref{cor:distance_bound} we see that this also holds for random embeddings, as they converge to the distance induced by the Lovász vector with high probability. For a discussion on the importance of the cut distance and homomorphism counts in the context of graph learning see \citet{dell2018lov}, \citet{klopp2019optimal}, \citet{grohe2020Word}, and \citet{hoang2020graph}.
\paragraph{Random graph and node embeddings}
\citet{wu2019scalable} adapted random Fourier features \citep{rahimi2007random} to graphs and proposed a sampling-based variant of the global alignment graph kernel. Similar sampling-based ideas were discussed before for the graphlet kernel \citep{shervashidze2009efficient, ghanem2021fast} and frequent-subtree kernels \citep{welke2015probabilistic}. The standard analysis of \citet{rahimi2007random} does not apply in our situation, as they require a shift-invariant kernel. Also the analysis by \citet{wu2019scalable} does not apply here, as they use finite-dimensional node embeddings as a starting point. None of the previously mentioned papers discusses random graph features in the context of expressiveness or completeness.
\citet{fang2021structure} and \citet{choromanski2023taming} considered random features for node embeddings and node classification tasks.
\paragraph{Random node initialisation} Instead of randomly embedding the whole graph, \citet{abboud2021surprising} and \citet{sato2021random} considered to initialise the vertices of the graphs with random labels. Through this they achieve universality in expectation. However, while for each realization of the random graph pattern $F$ our graph embedding $\phi_F$ is  universal in expectation and permutation-invariant, random node initialisation is only permutation-invariant in expectation.
\paragraph{Subgraph counts}
While subgraph counts are also a reasonable choice for expectation-complete graph embeddings, they have multiple drawbacks compared to homomorphism counts. Most importantly,  from a computational perspective, computing subgraph counts even for graphs such as trees or paths is NP-hard \citep{alon1995color, marx2014Everything}, while we can compute homomorphism counts efficiently  for pattern graphs with small treewidth \citep{DIAZ2002291}. In particular, all known exact algorithms for (induced) subgraph isomorphism counting have a worst-case runtime of $\scO(v(G)^{v(F)})$, even for patterns with small treewidth. This one of the main reasons why the graphlet kernel \citep{shervashidze2009efficient} and similar fixed pattern based approaches \citep{bouritsas2022improving} only count subgraphs up to size around 5 or are only sufficient. Alternative approaches exist, such as the cyclic pattern kernel \citep{horvath2004cyclic} and the neighbourhood-based kernel of \citet{costa2010fast}, that are efficiently computable in special cases, for example on most molecular graphs. %
 \section{Empirical Evaluation}
\label{sec:experiments}

We analyze the performance of our expectation-complete embedding that can be computed in expected polynomial time.
The details of the pattern sampling process are described in \Cref{sec:samplingdetails}.
We evaluate our proposed embeddings in two contexts.
We investigate how graph embeddings from message passing graph neural network (GNN) perform when augmented with our embeddings. 
To complement these results, we investigate the empirical expressive power of our embeddings on synthetic benchmark datasets. 
The code to sample patterns and to compute representations\footnote{Representations: \href{https://github.com/pwelke/homcount}{github.com/pwelke/homcount}}, as well as for the GNN experiments\footnote{GNN evaluation: \href{https://github.com/ocatias/HomCountGNNs}{github.com/ocatias/HomCountGNNs}} is available.

\begin{table}
    \centering
    \caption{Performance of different GNNs on 9 OGB benchmarks and \texttt{ZINC}. Baseline of a GNN with homorphism counts is the same GNN without homomorphism counts. Results for GNNs with homorphism counts are averaged over 9 different random samples of pattern graphs.}
    \label{tab:ogb_performance}
    \vskip 0.15in
    \begin{tabular}{lll}
\toprule
 & Top 1 / \hspace{0.25cm}2\hspace{0.25cm} / \phantom{0}3\phantom{0} & Beats baseline\\
 \midrule
GIN & \phantom{00}0\% / \phantom{0}0\% / \phantom{0}0\% & - \\
\homGNN{GIN} & \phantom{00}0\% / 10\% / 10\% & 100\%\\
GCN & \phantom{00}0\% / \phantom{0}0\% / \phantom{0}0\% & - \\
\homGNN{GCN} & \phantom{0}10\% / 10\% / 20\% & 90\% \\
GIN+F & \phantom{00}0\% / 10\% / 50\% & - \\
\homGNN{GIN} +F & \phantom{0}20\% / 40\% / 70\% & 90\% \\
GCN+F & \phantom{00}0\% / 50\% / 60\% & - \\
\homGNN{GCN}+F & \phantom{0}70\% / 80\% / 90\% & 90\%\\
\bottomrule
\end{tabular}
\vskip -0.1in
\end{table}

\subsection{Improving GNNs with Graph-Level Homomorphism Counts}
For graph-level prediction tasks, GNNs compute a graph embedding which is used by an MLP to make the final prediction.
We propose to extend the learned graph embedding by concatenating it with a vector of homomorphism counts for a set of up to 50 sampled patterns $\F$ (cf. \Cref{sec:practical}).
As this approach is independent of the GNN it can boost the expressiveness of any GNN. Furthermore, it is possible to extend already trained GNNs by these patterns by simply changing the width of the MLP and fine tuning.
We denote GNNs boosted by homomorphism counts by ``GNN+hom''.
We compare two settings: with (``GNN+F'') and without (``GNN'') node and edge features.
We determine whether our approach reliably boosts the prediction accuracy of GNNs.

\paragraph{Models.}  
We use GIN \citep{xu2018how} and GCN \citep{GCN} as baseline GNNs.
We compare the baselines against \homGNN{GIN} and \homGNN{GCN}. When using homomorphism counts, we first train the model without these counts and then finetune the entire model with the full homomorphism vector. We normalize the vector of homomorphism counts such that each entry has 0 mean and a standard deviation of 1 over the training set. We base our hyperparameters on \citet{ogb} and tune only the dropout rate (for all hyperparameters see Table~\ref{tab:hyper_params} in Appendix~\ref{sec:app_exp}). For models without homomorphism counts, we train and evaluate a model 10 times for the best hyperparameters. For models with homomorphism counts, we first find the best hyperparameters for one sample of homomorphism counts. Then, we train and evaluate the model with these hyperparameters for 8 different samples of pattern graphs and thus different homomorphism counts. For each model, we report the test result in the corresponding epoch with the best validation metric (see Appendix~\ref{sec:app_exp}). We report the average and standard deviation of all test results for a given type of model.

\paragraph{Setup.} We evaluate on the commonly used molecule datasets \texttt{ZINC}, \texttt{ogbg-molhiv} and \texttt{ogbg-moltox21} \citep{ogb}. Furthermore, we also train on 7 additional small molecule datasets from \citet{ogb} (see Appendix~\ref{sec:app_exp}).
For \texttt{ZINC} we use the same setup as \citet{Cellular-WL}: we use a batch size of 128 and an initial learning rate of $10^{-3}$ which we reduce by half every 20 epochs without an improvement of the validation performance. We stop training after either 500 epochs or after the learning rate is smaller than $10^{-5}$. To finetune on \texttt{ZINC}, we restart the training procedure with an initial learning rate of $5\cdot 10^{-4}$. For datasets based on OGB, we train for 100 epochs with a batch size of 32 and a fixed learning rate of $10^{-3}$ which corresponds to the initial learning rate on \texttt{ZINC}. To finetune, we train for 100 additional epochs with a learning rate of $5\cdot10^{-4}$.  We perform an ablation study in Appendix~\ref{sec:app_exp}.

\paragraph{Results.} We summarize the results of the experiments in Table~\ref{tab:ogb_performance}. The center column shows how often the best parameter setting for a variant (e.g. \homGNN{GIN}+F) was among the top 1, top 2, or top 3 scoring models among the ten datasets. Recall, that this references the predictive performance on the test in the epoch with the best performance on the validation set. We can immediately see that including homomorphism information is helpful for predictive performance as the best performing model for \emph{every} dataset uses homomorphism counts.
For each model, the rightmost column reports if a GNN variant with homomorphism counts beats its respective baseline GNN without added homomorphism counts. We can see that models with homomorphism counts outperform the baseline in at least 90\% of the datasets. This demonstrates that besides theoretical guarantees, homomorphism counts also reliably improve the practical prediction performance of GNNs. Detailed results for all datasets and an ablation study can be found in Appendix~\ref{sec:app_exp}.

\begin{table}[t]
    \caption{Accuracy on synthetic data}
    \label{tab:synthetic}
    \vskip 0.15in
    \centering
    \begin{tabular}{lll}
    \toprule
    Method & CSL & PAULUS25 \\
    \midrule
    GIN  & 10.00 $\pm$ 0.00 & 7.14 $\pm$ 0.00 \\
    GNTK & 10.00 $\pm$ 0.00 & 7.14 $\pm$ 0.00 \\ 
    GHC-Tree  & 10.00 $\pm$ 0.00 & 7.14 $\pm$ 0.00 \\
    GHC-Cycle & 100.0 $\pm$ 0.00 & 7.14 $\pm$ 0.00 \\
    WL  & 10.00 $\pm$ 0.00 & 7.14 $\pm$ 0.00 \\
    \midrule
Ours &  37.67 $\pm$ 9.11 &  100.0 $\pm$ 0.00 \\
    \bottomrule
    \end{tabular}
\vskip -0.1in
\end{table}

\subsection{Expressiveness on Synthetic Datasets}
We complement these results on real world graph datasets with an empirical analysis of our approach on synthetic benchmark datasets used to evaluate the expressiveness of graph learning approaches. 
On these benchmarks the labels encode isomorphism classes. Both datasets are balanced and have 10 (CSL) and 14 (PAULUS25) classes, respectively. 
We sample a fixed number $\ell=50$ of patterns and compute the sampled min kernel (resp. the corresponding embedding) as described in Section~\ref{sec:minkernel}.
Table~\ref{tab:synthetic} shows averaged accuracies of an SVM classifier trained on our feature sets on the datasets CSL \citep{murphy2019relational} and PAULUS25 \citep{hoang2020graph}\footnote{Originally from \href{https://www.distanceregular.org/graphs/paulus25.html}{www.distanceregular.org/graphs/paulus25.html}}.
We follow the experimental design of \citet{hoang2020graph} and compare to their published results.
We also included GNTK \citep{du2019graph}, GIN \citep{xu2018how}, and the WL-kernel \citep{shervashidze2011weisfeiler}. 
Even with as little as 50 features, it is interesting to note that a SVM with RBF kernel and our features performs perfectly on the PAULUS25 dataset, i.e., it is able to decide isomorphism for the strongly regular graphs in this dataset. 
On the CSL dataset the min kernel performs better than all competitors except GHC-cycle, which was specifically designed for this dataset.
The performance of the min kernel on this dataset increases monotonically for larger number of patterns, for instance to 48.8\% for 200 patterns, see Appendix~\ref{appendix:CSL}.
 \section{Conclusion}
\label{sec:conclusion}
In this work, we introduced the notion of expectation-complete graph embeddings---random embeddings, which in expectation can distinguish any pair of non-isomorphic graphs. 
We studied their general properties and have shown that repeated sampling will eventually allow us to distinguish any fixed pair of non-isomorphic graphs, which results in a universal representation for graphs of bounded size.
We proposed to sample the Lovász vector of homomorphism counts as one possibility to achieve expectation-completeness and have shown favourable properties, such as bounds on the convergence of the random embedding to the full Lovász vector. 
Using a specific distribution which gives exponentially decreasing probability to patterns with large treewidth, we showed that computing our embedding takes polynomial time in expectation. 
We discussed that homomorphism counts of patterns with treewidth up to $k$ can be seen as a sampling-based variant of the $k$-WL test with the same expressiveness in expectation and that homomorphism counts are strongly related to the cut-distance. 
Our empirical results have shown that homomorphism counts of sampled patterns (a) tend to increase the performance of MPNNs on a set of benchmark datasets and (b) allow to learn classifiers that distinguish non-isomorphic graphs where MPNNs and other baselines fail.

As future work, we will investigate approximate counts to make our implementation more efficient \citep{beaujean2021graph}. 
It is unclear how this affects expressiveness, as we loose permutation-invariance. 
Similar to \citet{abboud2021surprising} we would still retain permutation-invariance in expectation. 
Going beyond expressiveness results, our goal is to further study graph similarities suitable for graph learning, such as the cut distance as proposed by \citet{grohe2020Word}. 
Finally, instead of sampling patterns from a fixed distribution, a more promising variant is to adapt the sampling process in a sample-dependent manner. 
One could, for example, draw new patterns until each graph in the sample has a unique embedding (up to isomorphism) or at least until we are at least as expressive as 1-WL on the given sample. 
Alternatively, we could pre-compute frequent or interesting patterns as proposed by \citet{schulz2018mining} and use them to adapt the distribution. 
Such approaches would use the power of randomisation to select an appropriate graph embedding in a data-driven manner, instead of relying on a finite set of fixed and pre-determined patterns like previous work \citep{barceloGraphNeuralNetworks2021, bouritsas2022improving}. \section*{Acknowledgements}
Part of this work has been funded by the Vienna Science and Technology Fund (WWTF) project ICT22-059 as well as by the Federal Ministry of Education and Research of Germany and the state of North Rhine-Westphalia as part of the Lamarr Institute for Machine Learning and Artificial Intelligence, LAMARR22B. 
\bibliography{references}
\bibliographystyle{icml2022}

\newpage
\appendix
\onecolumn
\section{Proofs}\label{section:appendixproofs}

\MinKernel*
\begin{proof}
Let $G, G' \in \G_\infty$.
We have to show that 
\[ \phi_F^{\downarrow}(G) = \phi_F^{\downarrow}(G') \Leftrightarrow G \isomorphic G'  \ , \]
We start by the $\Rightarrow$ direction.
There are two cases:
\begin{enumerate}
    \item $v(G)=v(G')$: 
    
    Then, by Theorem~\ref{thm:Lovászvector} we have $\phi_n(G) = \phi_n(G')$ iff $G \isomorphic G'$ for $n = \min\{v(G), v(G')\} = v(G) = v(G')$.

    \item $v(G) \neq v(G')$: 
    
    Let w.l.o.g. $0 < v(G) < v(G')$.
    Let $P$ be the graph with exactly one vertex.
    Then $\homo(P, G) < \homo(P, G')$, i.e., we can distinguish graphs on different numbers of vertices using homomorphism counts. 
    As $\min\{v(G), v(G')\} \geq 1$, we have $P \in \G_{v(G)}$ and hence $\phi_{v(G)}(G) \neq \phi_{v(G)}(G')$.
\end{enumerate}
The $\Leftarrow$ direction follows directly from the fact that homomorphism counts are invariant under isomorphism. 
\end{proof}

\minKernelRandom*
\begin{proof}
We can apply the same arguments as before from Theorem \ref{thm:Lovasz_complete_exp} to show that the expected embeddings of two graphs $G,G'$ with size $n'=\min\{v(G),v(G')\}$ are equal iff their Lovász vector restricted to size $n'$ are equal. By Lemma \ref{lemma:min_kernel} we know that the latter only can happen if the two graphs are isomorphic.
\end{proof}

\impossiblityLemma*
\begin{proof}
Let $\scD$ be such a distribution and let $\scD'$ be the marginal distribution on the treewidths of the graphs given by $p_k = \Pr_{F\sim\scD}(\tw(F)=k)>0$. Let $G$ be a given input graph in the sample. \citet{DIAZ2002291} have shown that computing $\homo(F,G)$ takes time $\scO\left(v(F)v(G)^{\tw(F)+1}\right)$. Assume for the purpose of contradiction that we can guarantee an expected polynomial runtime (ignoring the $v(F)$ term and constant factors for simplicity): 
$$\E_{F\sim \scD}[v(G)^{\tw(F)+1}]=\sum_{k=1}^\infty p_k v(G)^{k+1}\leq Cv(G)^c$$ for some constants $C,c\in\naturals$. Then for all $k\geq c$, it must hold that $p_k v(G)^{k+1} \leq Cv(G)^c$, as all summands are positive. However, for large enough $v(G)$ the left hand side is larger than the right hand side. Contradiction.
\end{proof}

\completeRandomPolynomial*
\begin{proof}
We draw a treewidth upper bound $k$ from a Poisson distribution with parameter $\lambda$ to be determined later. Select a distribution $\scD_{n,k}$ which has full support on all graphs with treewidth up to $k$ and size up to $n$, for example, the one described in Appendix \ref{sec:samplingdetails}. 
Let $G\in\G_n$. Note that $\scD_{n,k}$ is a fixed distribution for $\G_n$ and independent of $G$. Also, $\phi^\downarrow_F(G)=0$ for all patterns $F$ with $v(F)>v(G)$ by definition of $\phi^\downarrow$. Hence, in this case we do not have to run the homomorphism counting computation. Thus, we can restrict the support of $\scD_{n,k}$ to $\scD_{v(G),k}$. Overall, the runtime is determined by the homomorphism counting. Using the algorithm of \citep{DIAZ2002291} this gives, for some constant $C\in\naturals$, an expected runtime of 
\begin{eqnarray*}
\E_{k\sim\Poi(\lambda),F\sim \scD_{v(G),k}}\left[Cv(F)v(G)^{\tw(F)+1}\right] & \leq & \E_{k\sim\Poi(\lambda)}\left[Cv(G)^{k+2}\right] \\
& = & \sum_{k=0}^\infty \frac{\lambda^k e^{-\lambda}}{k!}Cv(G)^{k+2}=\frac{Cv(G)^2}{e^\lambda}e^{\lambda v(G)}. \\
\end{eqnarray*}
We need to bound the right hand side by some polynomial $Dv(G)^d$ for some constants $D,d\in\naturals$. By rearranging terms we see that $$\lambda \leq \frac{\ln\frac{D}{C}+(d-2)\ln v(G)}{v(G)-1} $$is sufficient. To satisfy this inequality for all graphs in $\G_n$ simultaneuosly (meaning for all possible graph sizes up to $n$) we choose $$\lambda \leq \frac{\ln\frac{D}{C}+(d-2)\ln n}{n-1}\,,$$ which is valid as the right hand side is monotonically decreasing in $v(G)$.

\end{proof}

\completeRandomPolynomialMaxKern*
\begin{proof}
    The proof proceeds almost exactly as the one for \Cref{thm:completeRandomPolynomial}. However, to guarantee a runtime polynomial in $\scO(v(G))$ instead of merely $\scO(n)$ we have to proceed slightly more careful. 

    As before in \Cref{thm:completeRandomPolynomial} we draw $k$, the treewidth upper bound, from a Poisson distribution with parameter $\lambda$ and $F$ from a distribution $\scD_{n,k}$. Here, we additionally, require that $\E_F[v(F)]$ is bounded by a constant (while still having full support on the graphs of treewidth up to $k$ and size up to $n$), which is easily satisfied by for example a Geometric distribution (with its parameter set to some constant) restricted to $\{1,\dots,n\}$. We see that
    \begin{align*}
        \E_{k\sim\Poi(\lambda), F\sim \scD_{n,k}}[v(F)v(G)^{\tw(F)+1}] &= \sum_{i=1}^\infty \sum_{F'\in\G_n} \Pr(k=i,F=F') v(F)v(G)^{\tw(F)+1} \\
        &\leq\sum_{i=1}^\infty \sum_{F'\in\G_n} \Pr(k=i)\Pr(F=F'|k=i) v(F)v(G)^{k+1}\\
        &=\sum_{i=1}^\infty  \Pr(k=i)v(G)^{k+1}\sum_{F'\in\G_n} \Pr(F=F'|k=i)v(F)\\
        &=\sum_{i=1}^\infty  \Pr(k=i)v(G)^{k+1}\E_{F\sim \scD_{n,i}}[v(F)]\,.
        \end{align*}
        As the expectation of $v(F)$ is by assumption bounded by a constant, it remains to upper bound $\sum_{i=1}^\infty  \Pr(k=i)v(G)^{k+1}$ by a polynomial in $v(G)$. This can be achieved by choosing $\lambda$ as before in \Cref{thm:completeRandomPolynomial}.
\end{proof}

\section{Matching the Expressiveness of $k$-WL in Expectation}\label{appendix:kWL}
We devise a graph embedding matching the expressiveness of the $k$-WL test in expecation.
\completekWL*
\begin{proof}
Let $\mathcal{T}_k$ be the set of graphs with treewidth up to $k$ and $\scD$ be a distribution with full support on $\mathcal{T}_k$.
Then by the same arguments as before in Theorem \ref{thm:Lovasz_complete_exp}, the expected embeddings of two graphs $G$ and $G'$ are equal iff their Lovász vectors restricted to patterns in $\mathcal{T}_k$ are equal. By \citet{dvovrak2010recognizing} and \citet{dell2018lov} the latter happens iff $k$-WL returns the same color histogram for both graphs. This proves the first claim.

For the second claim note that the worst-case runtime for any pattern $F\in\mathcal{T}_k$ is $\scO\left(v(F)v(G)^{k+1}\right)$ by \citet{DIAZ2002291}. However, the equivalence between homomorphism counts on $\mathcal{T}_k$ and $k$-WL requires to inspect also patterns $F$ of all sizes, in particular, also larger than the size $n$ of the input graph. To remedy this, we can draw the pattern size $m=v(F)$ from some distribution with bounded expectation and full support on $\naturals$. For example, a geometrically distributed $m\sim\Geom(p)$ with any constant parameter $p\in(0,1)$ and expectation $\E[m] = \frac{1}{1-p}$ is sufficient. By linearity of expectation then
\begin{align*}
    \E_{F\sim\scD}\left[v(F)v(G)^{\tw(F)+1}\right]&\leq\E_{F\sim\scD}\left[v(F)v(G)^{k+1}\right]\\
    &= \E_{F\sim\scD}\left[v(F)\right]v(G)^{k+1}\\
    &=\scO\left(v(G)^{k+1}\right)\,.
\end{align*}

\end{proof}

\begin{algorithm}[t]
\caption{Sampling algorithm for a pattern set}
\label{alg:sampling}
\renewcommand{\algorithmiccomment}[1]{\hfill // #1}
\newcommand{\RETURN}[0]{\STATE\textbf{return} }
\begin{description}
\item[input:] the maximum graph size $n$, a number $\ell$ of requested patterns
\item[output:] a list $P$ of $\ell$ patterns
\end{description}
\begin{algorithmic}[1]
    \STATE Initialize $P = \{\text{singleton, edge, wedge, triangle}\}$ 
    \FOR{$i=5$ to $\ell$}
        \STATE Draw pattern size $N \sim \Geom(1 - \sqrt[n-3]{0.01}) + 3$
        \STATE Draw pattern treewidth $k \sim \Poi(\frac{1+\log(n)}{n}) + \Uni(3)$
        \STATE $k = \min(k, N-1)$ \COMMENT{maximum treewidth of graph on $N$ vertices is $N-1$}
        \STATE Sample a maximal graph $F$ with treewidth $k$ and $N$ vertices
        \FOR{$e \in E(F)$}
        \STATE Remove $e$ from $F$ with probability $p=0.1$
        \ENDFOR
        \STATE Add $F$ to $P$
    \ENDFOR
    \RETURN $P$
\end{algorithmic}
\end{algorithm}

\section{Sampling Details}\label{sec:samplingdetails}
To obtain a practical sampling algorithm for unlabeled graphs that draws from a distribution with full support on $\G_n$, we proceed as follows:
We first draw a pattern size $N \leq n$ from some distribution $\scD_1$ and then draw a treewidth upper bound $k < N$ from some distribution $\scD_2$. 
Then we want to sample any graph with treewidth at most $k$ and $N$ vertices with a nonzero probability. 
While guaranteed uniform sampling would be preferable, we resort to a simple sampling scheme that is easy to implement. 
A natural strategy is to first sample a \emph{$k$-tree}, which is a maximal graph with treewidth $k$, and then take a random subgraph of it.
Uniform sampling of $k$-trees is described by \citet{nie2015learning} and \citet{caminiti2010bijective}. 
Alternatively, the strategy of \citet{yoo2020sampling} is also possible. 
Note that we only have to guarantee that each pattern has a nonzero probability of being sampled; it does not have to be uniform. 
We achieve a nonzero probability for each pattern of at most a given treewidth $k$ by first constructing a random $k$-tree $P$ through its tree decomposition, by uniformly drawing a tree $T$ on $N-k$ vertices and choosing a root. 
We then create $P$ as the (unique up to isomorphism) $k$-tree that has $T$ as tree decomposition. 
We then randomly remove edges from $P$ i.i.d. with fixed probability $p_{\text{removal}} > 0$. This ensures that each subgraph of $P$ will be created with nonzero probability.

We choose $\scD_1$ as a geometric distribution with parameter $p=1 - \sqrt[n]{0.01}$ to ensure that $N \leq n$ with probability $0.99$. While we require in \Cref{thm:completeRandomPolynomialMaxKern} that the expectation of $N$ is bounded by a constant to satisfy an expected runtime that is polynomial in each graph size $v(G)$, this sampling only guarantees a runtime polynomial in $n$ (the upper bound on all graphs in the training sample), as $v(F)$ goes linearly in the runtime and could exceed $v(G)$. For the whole training sample this still has an expected polynomial runtime.
The min kernel, however, can be computed in polynomial runtime in $v(G)$ in expectation for this sampling scheme.
To achieve polynomial runtime in expectation, we choose $\scD_2$ as a Poisson distribution with parameter $\lambda=\frac{1+\log(n)}{n}$, where we add a number from the set $\Set{1,2,3}$ (drawn uniformly at random) to the outcome. 
This ensures polynomial runtime in expectation, while it increases the probability of drawing nontrivial treewidths.
Finally, we set $p_{\text{removal}} =  0.1$.
This sampling scheme assigns high probability to small graphs with low treewidth.
When drawing multiple patterns (e.g. $l=50$ as in our experiments), we observe that small patterns of size up to three are typically drawn multiple times.
We don't filter out isomorphic patterns to maintain expected polynomial time.
Instead, to practically improve the pattern distribution, we include the four nonisomorphic patterns up to size three (singleton, edge, wedge, and triangle) deterministically and draw the remaining pattern sizes from $\tilde{\scD}_1 = \Geom(1 - \sqrt[n-3]{0.01}) + 3$. 
Algorithm~\ref{alg:sampling} summarizes the sampling process for a fixed number of $\ell$ patterns.

\section{Experimental Details}
\label{sec:app_exp}

Our source code for the pattern sampling and homomorphism counting is available on github\footnote{Pattern sampling and representations: \href{https://github.com/pwelke/homcount}{github.com/pwelke/homcount}}.
The repository additionally contains the evaluation code for the synthetic datasets, with an SVM using an RBF kernel and started as a fork of the code of \citet{hoang2020graph}\footnote{Code of \citeauthor{hoang2020graph}: \href{https://{github.com/gear/graph-homomorphism-network}}{github.com/gear/graph-homomorphism-network}}.
We rely on the C++ code of \citet{curticapean2017homomorphisms}\footnote{Homomorphism counting: \href{https://github.com/ChristianLebeda/HomSub}{github.com/ChristianLebeda/HomSub}} to efficiently compute homomorphism counts. 
While the code computes a tree decomposition itself we decided to simply provide it with our tree decomposition of the $k$-tree which we compute as part of the sampling process, to make the computation more efficient\footnote{Adapted version of homomorphism counting: \href{https://github.com/pwelke/HomSub}{github.com/pwelke/HomSub}}. 
The datasets in the correct format as well as the sampled graph representations used for the evaluation in this paper can be downloaded\footnote{Links to download datasets and graph representations: \href{https://github.com/pwelke/homcount}{github.com/pwelke/homcount}} to reproduce the experiments in this paper.

\begin{table}
\begin{center}
    \begin{minipage}[b][5.2cm][t]{5cm}
    \vfill
    \begin{center}
    \caption{
    Average accuracy of an SVM using our min kernel on CSL for varying number of patterns $l$ over ten independent samples of patterns.
    }
\vskip 0.15in
    {
    \label{tab:CSLscaling}
    \begin{tabular}{ll}
    \toprule
    $\ell$ & Accuracy \\
    \midrule
	20 & 28.3\% $\pm$ 6.7\% \\
	50 & 35.0\% $\pm$ 8.6\% \\
	100 & 38.7\% $\pm$ 8.4\% \\
	150 & 44.6\% $\pm$ 6.2\% \\
	200 & 48.8\% $\pm$ 3.4\% \\
    \bottomrule
    \end{tabular}
    }
    \end{center}
    \end{minipage}\hspace{2cm}
\begin{minipage}[b][5.2cm][t]{8cm}
\vfill
\begin{center}
\caption{
Hyperparameter grid used for all models.
}
\vskip 0.15in
{
\label{tab:hyper_params}
\begin{sc}
\begin{tabular}{ll}
\toprule
\tablecell{Parameter $\downarrow$} & \tablecell{Values $\downarrow$}  \\
\midrule
Embedding Dimension\ & 300\\ 
Number of GNN Layers & 5 \\ 
Number of MLP Layers & 2 \\ 
Dropout Rate & 0, 0.5\\ 
Pooling Operation & mean \\ 
\bottomrule
\end{tabular}
\end{sc}
}
\end{center}
\end{minipage}
\end{center}
\end{table}

The source code for the GNN evaluation is available in a separate github repository\footnote{GNN evaluation: \href{http://github.com/ocatias/HomCountGNNs}{github.com/ocatias/HomCountGNNs}}. We implement our models in PyTorch and PyTorch Geometric and train on a single NVIDIA GeForce RTX 3080 GPU.
We evaluate on the molecular graph level tasks from the Open graph Benchmark \citep{ogb} and on the \texttt{ZINC} subset dataset provided by \citet{DBLP:journals/corr/Gomez-Bombarelli16}.
We use the provided train/validate/test splits.
\Cref{tab:hyper_params} shows the values of the hyperparameters used for each of the ten datasets. The hyperparameters are based on \citet{ogb} with the difference that we are using two layers in the final MLP instead of one as we have found this to yield significantly better predictions in preliminary experiments. \Cref{tab:full_results_OGB} shows the relevant metrics and results for all datasets.

\paragraph{Ablation Study.}  We perform an ablation study to investigate whether improvements from homomorphism counts are due to the homomorphism counts or because we finetune the model for more epochs. For this, we train GIN with misaligned homomorphism counts i.e., counts that were computed for a different graph. We denote this as GIN+MIS. We compare the performance of baselines against models with (misaligned) features on all datasets with features except \texttt{ZINC} and \texttt{ogbg-molhiv} due to their large size. Table~\ref{tab:GNN_ablations} shows the results of the ablations. We see (1) that GIN with homomorphism counts always outperforms GIN with misaligned features. Interestingly, (2) GIN with misaligned features outperforms the baseline GNN in 7 out of 8 datasets. From (2) it follows that some of the improvements of using homomorphism counts is dude to the additional finetuning. However, from (1) it follows that much of improvement of using homomorphism counts is due to the homomorphism counts and not because of the finetuning.

\label{appendix:CSL}
\paragraph{Stability and Increasing Sample Size on CSL}
We investigate the stability of predictive performance over independent pattern sets as well as the performance of the min kernel for larger samples.
We repeat the experiment on CSL shown in Table~\ref{tab:synthetic} ten times each for varying number of sampled patterns $\ell \in \Set{20,50,100,150,200}$.
Table~\ref{tab:CSLscaling} shows the results.
The target of the prediction task on CSL is the isomorphism class of the input graph.
Our theory implies that the expressiveness of the representations increases with number of samples. 
We indeed see that the performance on CSL increases with number of sampled patterns.
Furthermore, the variance over different sets drops with increasing number of sampled patterns. 
Thus, at least on CSL, the performance of an SVM using the min kernel is not very sensitive to the pattern set and increases with the number of samples.
However, we note that we could design a distribution over cycle graph patterns alone which would allow to perfectly distinguish the isomorphism classes on CSL.

\begin{table}[t]
\centering
\caption{Results of different GNNs on molecular datasets. \textbf{Bold} results are GNNs with homomorphism counts that are better than the same GNN without homomorphism counts. Results with homomorphism counts are averaged over 9 different samples of pattern graphs.}
\label{tab:full_results_OGB}
\vskip 0.15in
\begin{tabular}{llllll}\toprule

\tablecell{ DATA SET $\rightarrow$ \\ $\downarrow$ MODEL} & \tablecell{MOLBACE \\ roc-auc $\uparrow$} & \tablecell{MOLCLINTOX \\ roc-auc $\uparrow$} & \tablecell{MOLBBBP\\ roc-auc$\uparrow$}  & \tablecell{MOLSIDER\\ roc-auc $\uparrow$} & \tablecell{MOLTOXCAST \\ roc-auc $\uparrow$} \\
\midrule
GIN & $82.2 \pm 2.0$ & $61.2 \pm 4.5$ & $60.9 \pm 2.4$ & $57.5 \pm 1.4$ & $57.1 \pm 0.8$ \\
\homGNN{GIN} & $\mathbf{82.7 \pm 1.8}$ & $\mathbf{61.5 \pm 4.1}$ & $\mathbf{63.0 \pm 1.1}$ & $\mathbf{58.4 \pm 1.2}$ & $\mathbf{58.1 \pm 0.5}$ \\
GCN & $81.4 \pm 2.4$ & $68.4 \pm 3.6$ & $59.2 \pm 1.0$ & $58.2 \pm 1.3$ & $58.6 \pm 0.6$ \\
\homGNN{GCN} & $\mathbf{84.6 \pm 1.3}$ & $63.4 \pm 4.7$ & $\mathbf{61.2 \pm 0.7}$ & $\mathbf{59.2 \pm 1.2}$ & $\mathbf{59.4 \pm 0.4}$ \\
GIN+F & $75.5 \pm 3.0$ & $84.8 \pm 3.7$ & $67.2 \pm 1.5$ & $57.7 \pm 1.8$ & $61.8 \pm 0.8$ \\
\homGNN{GIN}+F & $\mathbf{76.4 \pm 2.6}$ & $\mathbf{86.9 \pm 3.5}$ & $\mathbf{68.8 \pm 1.3}$ & $\mathbf{58.4 \pm 1.5}$ & $\mathbf{63.2 \pm 0.8}$ \\
GCN+F & $82.2 \pm 1.4$ & $88.2 \pm 3.0$ & $66.4 \pm 2.6$ & $59.3 \pm 1.6$ & $65.7 \pm 0.4$ \\
\homGNN{GCN}+F & $81.3 \pm 1.6$ & $\mathbf{90.4 \pm 2.0}$ & $\mathbf{70.8 \pm 1.2}$ & $\mathbf{60.0 \pm 1.9}$ & $\mathbf{65.8 \pm 0.8}$ \\ \midrule
 &  \tablecell{MOLLIPO \\ rmse $\downarrow$} & \tablecell{MOLTOX21\\ roc-auc $\uparrow$} & \tablecell{MOLESOL\\ rsmse $\downarrow$}  &\tablecell{MOLHIV\\ roc-auc $\uparrow$} &   \tablecell{ZINC\\ mae $\downarrow$}  \\
 \midrule
GIN & $1.062 \pm 0.025$ & $65.4 \pm 1.9$ & $1.852 \pm 0.044$ & $69.1 \pm 2.2$ & $1.262 \pm 0.017$ \\
\homGNN{GIN} & $\mathbf{1.006 \pm 0.017}$ & $\mathbf{67.5 \pm 1.1}$ & $\mathbf{1.746 \pm 0.096}$ & $\mathbf{71.0 \pm 1.9}$ & $\mathbf{1.231 \pm 0.014}$ \\
GCN & $1.056 \pm 0.035$ & $66.7 \pm 0.7$ & $1.855 \pm 0.073$ & $69.1 \pm 2.2$ & $1.281 \pm 0.013$ \\
\homGNN{GCN} & $\mathbf{0.986 \pm 0.015}$ & $\mathbf{66.8 \pm 1.1}$ & $\mathbf{1.735 \pm 0.066}$ & $\mathbf{72.2 \pm 1.4}$ & $\mathbf{1.26 \pm 0.014}$ \\
GIN+F & $0.739 \pm 0.019$ & $75.4 \pm 0.9$ & $1.197 \pm 0.061$ & $76.5 \pm 2.0$ & $0.208 \pm 0.005$ \\
\homGNN{GIN}+F & $\mathbf{0.71 \pm 0.021}$ & $75.2 \pm 0.8$ & $\mathbf{1.014 \pm 0.044}$ & $\mathbf{77.7 \pm 1.5}$ & $\mathbf{0.174 \pm 0.005}$ \\
GCN+F & $1.188 \pm 1.387$ & $77.2 \pm 0.6$ & $1.197 \pm 0.069$ & $78.3 \pm 1.0$ & $0.234 \pm 0.007$ \\
\homGNN{GCN}+F & $\mathbf{0.816 \pm 0.282}$ & $\mathbf{78.0 \pm 0.6}$ & $\mathbf{0.991 \pm 0.045}$ & $\mathbf{78.8 \pm 1.3}$ & $\mathbf{0.207 \pm 0.008}$\\\bottomrule
\end{tabular}
\vskip 0.1in
\end{table}

\begin{table}[t]
\centering
\caption{Ablations of GIN with misaligned homomorphism counts (GIN+MIS+F) against GIN without homomorphism counts (GIN+F) and GIN with homomorphism counts(homGNN{GIN}+F). Results with (misaligned) homomorphism counts are averaged over 9 different samples of pattern graphs. \textbf{Bold} results are GIN with (misaligned) homomorphism counts that are better than GIN without homomorphism counts. \textcolor{red}{Red} results are \homGNN{GIN}+F that outperforms GIN+MIS+F.}
\label{tab:GNN_ablations}
\vskip 0.15in
\begin{tabular}{llllll}\toprule

\tablecell{ DATA SET $\rightarrow$ \\ $\downarrow$ MODEL} & \tablecell{MOLBACE \\ roc-auc $\uparrow$} & \tablecell{MOLCLINTOX \\ roc-auc $\uparrow$} & \tablecell{MOLBBBP\\ roc-auc$\uparrow$}  & \tablecell{MOLSIDER\\ roc-auc $\uparrow$} & \tablecell{MOLTOXCAST \\ roc-auc $\uparrow$} \\
\midrule
 & ogbg-molbace & ogbg-molclintox & ogbg-molbbbp & ogbg-molsider & ogbg-moltoxcast \\
GIN+F & $75.5 \pm 3.0$ & $84.8 \pm 3.7$ & $67.2 \pm 1.5$ & $57.7 \pm 1.8$ & $61.8 \pm 0.8$ \\
GIN+MIS+F & $\mathbf{76.3 \pm 3.3}$ & $\mathbf{86.0 \pm 3.6}$ & $\mathbf{67.9 \pm 2.3}$ & $\mathbf{58.3 \pm 1.9}$ & $\mathbf{62.7 \pm 0.9}$ \\
\homGNN{GIN}+F & \textcolor{red}{$\mathbf{76.4 \pm 2.6}$} & \textcolor{red}{$\mathbf{86.9 \pm 3.5}$} & \textcolor{red}{$\mathbf{68.8 \pm 1.3}$} & \textcolor{red}{$\mathbf{58.4 \pm 1.5}$} & \textcolor{red}{$\mathbf{63.2 \pm 0.8}$}  \\ \midrule
 &  \tablecell{MOLLIPO \\ rmse $\downarrow$} & \tablecell{MOLTOX21\\ roc-auc $\uparrow$} & \tablecell{MOLESOL\\ rsmse $\downarrow$} \\
 \midrule
 & ogbg-mollipo & ogbg-moltox21 & ogbg-molesol \\
GIN+F & $0.739 \pm 0.019$ & $75.4 \pm 0.9$ & $1.197 \pm 0.061$ \\
GIN+MIS+F & $\mathbf{0.732 \pm 0.017}$ & $74.9 \pm 0.9$ & $\mathbf{1.175 \pm 0.067}$ \\
\homGNN{GIN}+F & \textcolor{red}{$\mathbf{0.71 \pm 0.021}$} & \textcolor{red}{$75.2 \pm 0.8$} & \textcolor{red}{$\mathbf{1.014 \pm 0.044}$}\\\bottomrule
\end{tabular}
\vskip -0.1in
\end{table}

\end{document}